\documentclass[sigconf]{acmart}
\AtBeginDocument{%
  }

\copyrightyear{2026}
\acmYear{2026}
\setcopyright{cc}
\setcctype{by}
\acmConference[KDD 2026] {Proceedings of the 32nd ACM SIGKDD Conference on Knowledge Discovery and Data Mining V.2}{August 9--13, 2026}{Jeju Island, Republic of Korea.}
\acmBooktitle{Proceedings of the 32nd ACM SIGKDD Conference on Knowledge Discovery and Data Mining V.2 (KDD 2026), August 9--13, 2026, Jeju Island, Republic of Korea}
\acmISBN{979-8-4007-2259-2/2026/08}
\acmDOI{10.1145/3770855.3817820}

\usepackage{enumitem}
\usepackage{hyperref}
\usepackage{amsthm}     
\usepackage{cleveref}
\usepackage{booktabs}
\usepackage[table]{xcolor}
\usepackage{graphicx}
\usepackage{subcaption}

\usepackage{algorithm} 
\usepackage{algorithmic} 
\usepackage{multirow} 

\usepackage{pifont} 
\newcommand{\cmark}{\ding{51}}  
\newcommand{\xmark}{\ding{55}}  

\newtheoremstyle{colon}{}{}{}{}{\bfseries}{: }{ }{}
\theoremstyle{colon}

\newtheorem{definition}{Definition}[section]
\newtheorem{theorem}{Theorem}[section]   
\newtheorem*{problem}{Problem}           

\crefname{section}{Section}{Sections}
\crefname{figure}{Figure}{Figures}
\crefname{table}{Table}{Tables}
\crefname{theorem}{Theorem}{Theorems}
\crefname{definition}{Definition}{Definitions}

\begin{document}

\title{HyperPatch: Sequential Knowledge Editing Under $n$-ary Structural Drift}


\author{Yu-Kai Chan}
\orcid{0009-0005-0468-2525}
\affiliation{%
  \institution{National Yang Ming Chiao Tung University}
  \city{Hsinchu}
  \country{Taiwan}
}
\email{ctw33888@gmail.com}

\author{Wen-Sheng Lien}
\orcid{0009-0004-1107-5590}
\affiliation{%
  \institution{National Yang Ming Chiao Tung University}
  \city{Hsinchu}
  \country{Taiwan}
}
\email{vincentlien.ii13@nycu.edu.tw}

\author{Dong-Ting Yao}
\orcid{0009-0004-8106-4715}
\affiliation{%
  \institution{National Yang Ming Chiao Tung University}
  \city{Hsinchu}
  \country{Taiwan}
}
\email{ryanyao.cs14@nycu.edu.tw}

\author{Bo-Kai Ruan}
\orcid{0000-0002-9847-3628}
\affiliation{%
  \institution{National Yang Ming Chiao Tung University}
  \city{Hsinchu}
  \country{Taiwan}
}
\email{bkruan.ee11@nycu.edu.tw}

\author{Kwan-Yeung Lin}
\orcid{0009-0009-1956-6171}
\affiliation{%
  \institution{National Yang Ming Chiao Tung University}
  \city{Hsinchu}
  \country{Taiwan}
}
\email{keithlin.ee13@nycu.edu.tw}

\author{Hong-Han Shuai}
\orcid{0000-0003-2216-077X}
\affiliation{%
  \institution{National Yang Ming Chiao Tung University}
  \city{Hsinchu}
  \country{Taiwan}
}
\email{hhshuai@nycu.edu.tw}

\author{Meng-Fen Chiang}
\orcid{0009-0008-8385-0380}
\affiliation{%
  \institution{National Yang Ming Chiao Tung University}
  \city{Hsinchu}
  \country{Taiwan}
}
\email{meng.chiang@nycu.edu.tw}
\authornote{Corresponding author.}

\renewcommand{\shortauthors}{Chan et al.}

\newcommand{\method}{HyperPatch}
\newcommand{\drift}{Structural Drift}
\newcommand{\sktf}{SKTF}
\newcommand{\nary}{$n$-ary}
\begin{abstract}
Large Language Models (LLMs) rely on Knowledge Editing (KE) to maintain temporal validity, yet real-world knowledge is inherently \nary{}. We demonstrate that in non-stationary environments, sequential updates to complex relations induce \textit{\drift{}}, a phenomenon where the binary reification of \nary{} events into triples fractures relational atomicity. This precipitates \textit{Structure-Conditioned Knowledge Transfer Failure} (\sktf{}), a systematic mis-grounding of the retriever frequently misdiagnosed as parametric hallucination. To tackle this, we propose \textbf{\method{}}, a parameter-preserving framework that reformulates sequential KE as a stability problem over hypergraph manifolds. \method{} preserves event integrity through three phases: (i) \textbf{Structural Prior Initialization}, establishing a topology-aware embedding space via contrastive learning on a Hypergraph Neural Network (HGNN) to capture high-order correlations; (ii) \textbf{Sequential Topology Editing}, utilizing a dual-stage mechanism that employs SimHash-based Topological Alignment for rapid conflict resolution and Topological LoRA Adaptation to track drift without backbone retraining; and (iii) \textbf{Structure-Conditioned Reasoning}, which integrates globally consistent evidence from fused linguistic and structural manifolds. On the MQuAKE-CF and MQuAKE-T benchmarks, \method{} achieves relative gains in Hop-wise Accuracy (H-Acc) of 96.24\% and 21.06\% over the strongest baseline, respectively.
Further ablations demonstrate superior reliability under continuous $n$-ary update streams, whereas the standard KG-based variant suffers H-Acc collapses of up to 88.3\% due to structural misalignment. 
\noindent Our code is publicly available at 
\url{https://github.com/Kevin20010912/HyperPatch.git}.
\end{abstract}

\begin{CCSXML}
<ccs2012>
   <concept>
       <concept_id>10010147.10010178.10010187</concept_id>
       <concept_desc>Computing methodologies~Knowledge representation and reasoning</concept_desc>
       <concept_significance>500</concept_significance>
       </concept>
   <concept>
       <concept_id>10002951.10003317.10003338.10003341</concept_id>
       <concept_desc>Information systems~Language models</concept_desc>
       <concept_significance>500</concept_significance>
       </concept>
   <concept>
       <concept_id>10002951.10003317.10003347.10003348</concept_id>
       <concept_desc>Information systems~Question answering</concept_desc>
       <concept_significance>500</concept_significance>
       </concept>
 </ccs2012>
\end{CCSXML}

\ccsdesc[500]{Computing methodologies~Knowledge representation and reasoning}
\ccsdesc[500]{Information systems~Language models}
\ccsdesc[500]{Information systems~Question answering}

\keywords{Knowledge Editing; Hypergraph Editing; N-ary Relations; Structural Drift; Retrieval-Augmented Generation}


\maketitle
\newcommand\kddavailabilityurl{https://doi.org/10.5281/zenodo.20372801}
\ifdefempty{\kddavailabilityurl}{}{
\begingroup\small\noindent\raggedright\textbf{Resource Availability:}\\
The source code of this paper has been made publicly available at \url{\kddavailabilityurl}.
\endgroup
}

\section{Introduction}\label{sec:intro}
Large Language Models (LLMs) are increasingly used as the generator in multi-hop Question Answering (QA) pipelines \cite{zhong2023mquake}, where answers are produced by conditioning on external knowledge sources (\textit{e.g.}, retrieved text or graph facts). In such systems, end-to-end reliability depends not only on the parametric reasoning of LLMs, but also on whether the external knowledge being accessed remains current and consistently reachable through the system’s representation and retrieval interfaces. When facts evolve, a practical maintenance primitive is \textit{Knowledge Editing} (KE): performing localized updates that inject new facts while avoiding global retraining and minimizing collateral changes \cite{meng2022locating,meng2023mass}. Ideally, KE facilitates a ``clean contract'': newly injected facts should be immediately actionable for downstream reasoning while maintaining global consistency.


\begin{figure}[t]
\centering
    
    \begin{subfigure}[b]{1\columnwidth}
        \centering
        \includegraphics[width=\textwidth]{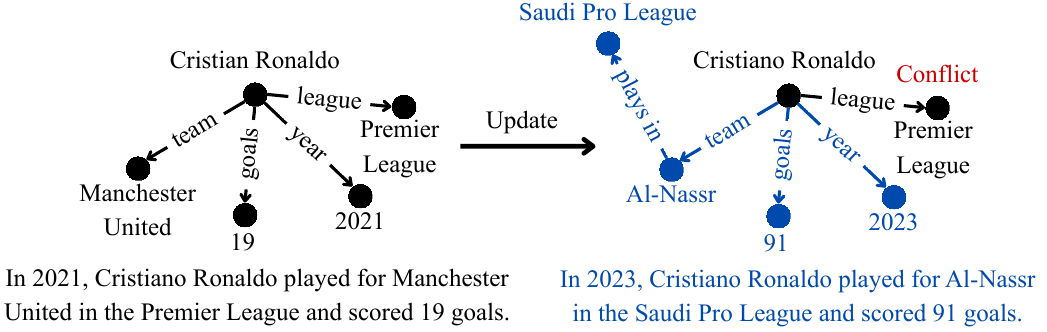}
        \caption{}
        \label{fig:sub1}
    \end{subfigure}
    
    \begin{subfigure}[b]{0.5\columnwidth}
        \centering
        \includegraphics[width=\textwidth]{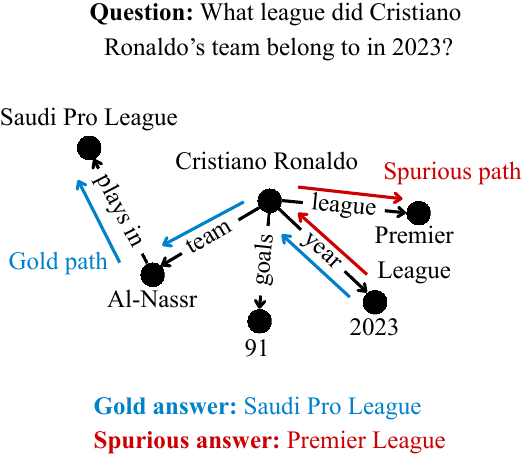}
        \caption{}
        \label{fig:sub2}
    \end{subfigure}
    \begin{subfigure}[b]{0.47\columnwidth}
        \centering
        \includegraphics[width=\textwidth]{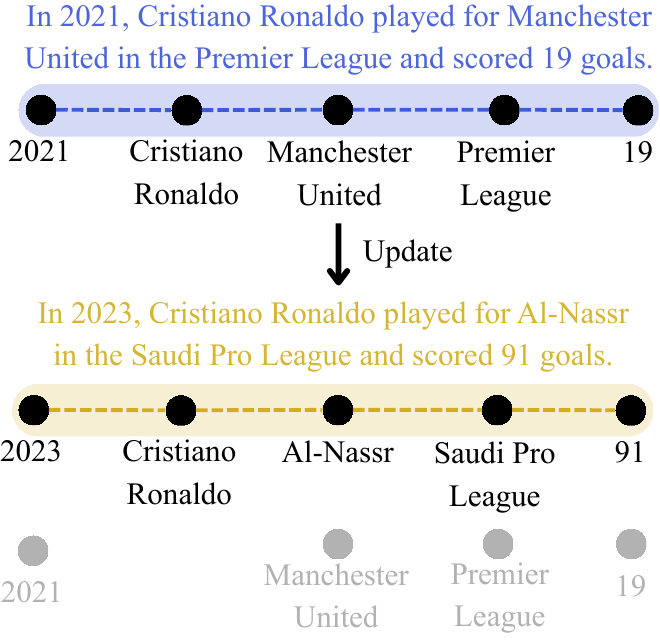}
        \caption{}
        \label{fig:sub3}
    \end{subfigure}     
\caption{SKTF under $n$-ary structural drift. (a) Factorization to binary edges breaks event coupling and retrieves fragments. (b) Binary walks create locally similar but globally invalid compositions. (c) \method{} retrieves atomic hyperedges, preserving event consistency across updates.}
\label{fig:motivation}
\end{figure}

Most prior work probes this contract in \emph{static} regimes, implicitly assuming that the knowledge representation and the retriever’s embedding manifold remain aligned over time \cite{meng2022locating,meng2023mass}. Real deployments, however, experience sequential updates and schema evolution that introduce \textit{structural drift} \cite{gama2014survey}. We highlight a failure mode under such drift: \textbf{Structure-Conditioned Knowledge Transfer Failure (SKTF)}. SKTF arises when the topology mediating knowledge access (\textit{e.g.}, how complex relations are factorized and connected) diverges from the retriever’s learned embedding distribution. As a result, the system may retrieve stale or structurally incompatible evidence despite the underlying knowledge being correctly updated, mis-grounding the generator and producing confident errors that are often misattributed to hallucination or parametric conflicts \cite{chen2022rich}. We argue that, under $n$-ary structural shifts, these errors primarily reflect \emph{retrieval-manifold misalignment} rather than an inherent deficiency in the reasoning capacity.


Specifically, SKTF stems from a mismatch between how facts arise as \emph{events} and how they are represented for retrieval. Real-world knowledge is often \textit{$n$-ary}: an event jointly binds multiple entities, roles, attributes, and contextual qualifiers \cite{wen2016learning}. In contrast, standard Knowledge Graph (KG) pipelines typically \emph{project} or \emph{reify} such events into sets of binary triples \cite{rosso2020beyond} to support link-based storage and embedding-based retrieval. While this approximation is often acceptable when the schema and data distribution are stable, it becomes brittle under sequential updates and evolving entity/query distributions. Figure~\ref{fig:motivation} illustrates two drift mechanisms: \textbf{(a) Event Factorization Drift}, where decomposing an $n$-ary event into loosely-coupled edges weakens semantic coupling. Under distribution shift (\textit{e.g.}, a 2023 update to a player's profile), the retriever’s neighborhood is reshaped. Fragments such as the \textit{Ronaldo} entity and the \textit{Premier League} relation may remain proximal in embedding space despite belonging to different underlying events. This yields evidence that is locally relevant yet \emph{event-incomplete} \cite{thakur2021beir}, as the retriever can no longer enforce the original 
\nary{} constraints. \textbf{(b) Spurious Multi-hop Composition}, where binary projections admit accidental multi-hop paths that appear plausible hop-by-hop but fail to form a valid global chain when composed \cite{min2019compositional}. In both cases, retrieval returns evidence that is locally plausible but globally inconsistent, so the generator is mis-grounded not because the required knowledge is absent, but because the system assembles an invalid event structure from drifted binary proxies.

Despite the urgency of SKTF, existing literature overlooks the structural dimension of editing. \textit{Parametric Editing} \cite{meng2022locating, meng2023mass} effectively modifies single-hop facts but struggles with distributed evidence \cite{zhong2023mquake} and suffers from catastrophic forgetting under sequential streams \cite{kirkpatrick2017overcoming}. Conversely, \textit{Retrieval-Augmented Editing} \cite{mitchell2022memory, han2023memory} relies on frozen, dense retrievers that are agnostic to evolving graph topology. These methods assume embedding spaces remain aligned with knowledge structures, which is an assumption that fails as $n$-ary relations drift. Currently, no framework explicitly models the \textit{stability of event compatibility} during sequential editing. This motivates a shift toward \textbf{Hypergraphs}. Unlike standard KGs, a hypergraph treats each $n$-ary event as a single hyperedge, preserving the atomic unit that couples participants and constraints \cite{feng2019hypergraph, haghighi2024tropical}. Hypergraph-based Retrieval-Augmented Generation (RAG)\cite{luo2025hypergraphrag} can thus retrieve evidence as ``event-consistent units'' rather than arbitrary triple chains, mitigating spurious composition \cite{edge2024global}. Crucially for editing, hypergraphs allow drift to be measured at the level of event compatibility. As shown in Fig. \ref{fig:motivation}(c), while standard KGs fragment context into drifting triples, hyperedge representations maintain structural integrity. This ensures that updates to a single participant do not decouple associated context, providing a robust substrate for scalable, sequential knowledge editing.

In this paper, we reformulate sequential knowledge editing as a topological stability problem under $n$-ary structural drift. We posit that the primary object to be preserved is the \textit{event compatibility manifold}, which is the latent structure governing how $n$-ary relations are retrieved and composed. To this end, we introduce \textbf{\method{}}, a parameter-preserving framework that stabilizes hyperedge representations through three core innovations:
(i) \textbf{Structural Prior Initialization:} To mitigate SKTF, we synchronize the retriever's embedding manifold with the evolving hypergraph topology via an HGNN and topological LoRA adaptation. By grounding updates in $n$-ary structural constraints, this replay-free mechanism circumvents the catastrophic forgetting inherent in sequential editing \cite{hartvigsen2024aging}, maintaining a robust structural anchor as the knowledge base evolves.
(ii) \textbf{Sequential Topology Editing:} To tackle $n$-ary updates at scale, we implement a retrieval layer based on hyperedge-centric hashing. This facilitates \textbf{$O(1)$} structural mapping for rapid hyperedge replacement or expansion, effectively decoupling update latency from the cumulative edit volume.
(iii) \textbf{Structure-Conditioned Reasoning:} 
For drift-resilient inference, \method{} dynamically integrates retrieval manifolds with updated $n$-ary topologies. This facilitates precise entity grounding and consistent traversal of reasoning paths, maintaining multi-hop QA performance even under stochastic structural fluctuations.

Our contributions are as follows:
\begin{itemize}[leftmargin=*]
    \item We identify SKTF as a dominant error source in dynamic multi-hop QA and provide a theoretical bound linking retrieval drift to reasoning risk in $n$-ary settings.
    \item We propose \textbf{\method{}}, the first framework to integrate hypergraph structural priors into sequential KE. By preserving event integrity, \method{} robustly mitigates the $n$-ary factorization drift that fractures standard triple-based editors.
    \item Evaluations on multi-hop benchmarks demonstrate that \method{} achieves superior reliability (96.24\% and 21.06\% H-Acc) and maintains robustness to distributional shifts, mitigating the catastrophic 42.9\%--88.3\% H-Acc degradation inherent in KG-based variant. Simultaneously, \method{} delivers a $25.9\times$ retrieval speedup over competitive knowledge editing frameworks.
\end{itemize}

\begin{figure*}[!ht]
    \centering
    \includegraphics[width=0.85\linewidth]{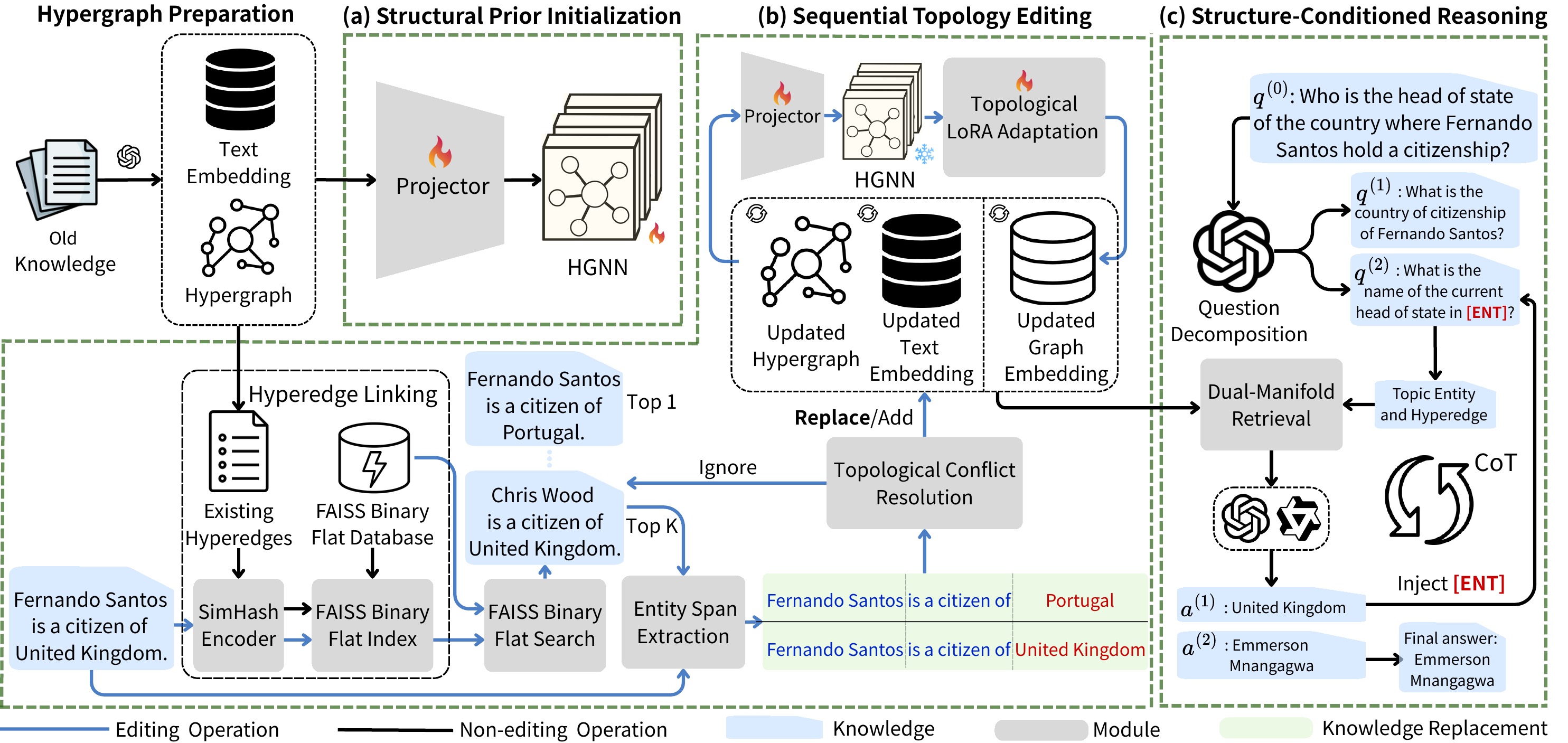}
    \caption{The overall framework of HyperPatch.}
    \label{fig:framework}
\end{figure*}

\section{Preliminaries}\label{sec:pre}
We formally define the representation of $n$-ary knowledge (\cref{sec:rkg}), the sequential editing setting (\cref{sec:ke}), and the structural drift that challenges contemporary systems (\cref{sec:problem}).

\subsection{\(n\)-ary Relational Knowledge Graphs}\label{sec:rkg}
Real-world events typically manifest as $n$-ary relational constraints where semantic validity depends on the joint participation of multiple entities and attributes. 

\begin{definition}[\textbf{\(n\)-ary Fact}]
An $n$-ary fact $f$ is an atomic tuple $f = (r, \{v_1, \dots, v_n\})$, where $r \in \mathcal{R}$ is the relation type and $\{v_i\} \subseteq \mathcal{V}$ are the participating entities. Unlike binary triples $(s, r, o)$, $f$ is a \textit{topological unit} whose semantics are lost under binary factorization.
\end{definition}

To preserve these higher-order dependencies, we introduce a \textit{Knowledge Hypergraph} structure.
\begin{definition}[\textbf{Knowledge Hypergraph}]
A knowledge hypergraph is $\mathcal{H} = (\mathcal{V}, \mathcal{E})$, where $\mathcal{V}$ is the set of entity vertices and $\mathcal{E} \subseteq 2^{\mathcal{V}} \setminus \{\emptyset\}$ is the set of hyperedges. Each hyperedge $e \in \mathcal{E}$ represents an $n$-ary fact $f$, preserving event atomicity by connecting all $\{v_i\}$ participants. 
\end{definition}

Each entity $v \in \mathcal{V}$ and hyperedge $e \in \mathcal{E}$ is associated with a textual description $x_v, x_e$, projected into a structure-aware latent space via an encoder to yield embeddings $\mathbf{x}_v, \mathbf{x}_e \in \mathbb{R}^{d_{\text{text}}}$. The collective representations form the textual embedding matrix $\mathbf{X} \in \mathbb{R}^{N \times d_{\text{text}}}$, where $N = |\mathcal{V}| + |\mathcal{E}|$.


\subsection{Sequential Knowledge Editing}\label{sec:ke}
Sequential Knowledge Editing (SKE) involves a continuous stream of update requests $\Delta = \{\delta_1, \dots, \delta_T\}$. Each $\delta_t = (f_{\text{old}} \to f_{\text{new}})$ is a directive to transition the hypergraph from $\mathcal{H}_{t-1}$ to $\mathcal{H}_t$. A successful SKE framework must satisfy:

\begin{definition}[\textbf{Edit Request}]
An edit request $\delta_t$ is a directive to insert, delete, or modify an $n$-ary fact, denoted as $\delta_t = (f_{\text{old}} \to f_{\text{new}})$, where $f_{\text{new}}$ represents the target knowledge state that must be incorporated into the hypergraph topology.
\end{definition}

The objective of Sequential Knowledge Editing (SKE) is to transition the system from state $\mathcal{S}_{t-1}$ to $\mathcal{S}_t$ satisfying three conditions:
\begin{itemize}[leftmargin=*]
    \item \textbf{Reliability (Edit Success):} The system correctly answers queries $Q_{\text{edit}}$ targeting $f_{\text{new}}$, ensuring the new knowledge is effectively integrated \cite{yao2023editing}.
    \item \textbf{Locality (Specificity):} The system maintains performance on queries $Q_{\text{loc}}$ targeting unrelated knowledge, thus mitigating the catastrophic forgetting characteristic of sequential learning \cite{kirkpatrick2017overcoming}.
    \item \textbf{Structural Stability:} The retrieval mechanism adapts to the evolving topology of $\mathcal{H}_t$ to prevent retrieval drift, a challenge frequently overlooked in static retrieval-augmented editing.
\end{itemize}

\subsection{Problem Formulation}\label{sec:problem}
The primary challenge in SKE is \textit{Structural Drift}. As edits accumulate, the topological distribution of $\mathcal{H}_t$ diverges from the retriever's original training prior $\mathcal{D}_0$. This precipitates \textbf{Structure-Conditioned Knowledge Transfer Failure (SKTF)}, where retrieval becomes locally plausible but structurally invalid for $f_{\text{new}}$.


\begin{problem}[\textbf{SKE under \(n\)-ary Structural Drift}]
Given an initial hypergraph $\mathcal{H}_0$, a pre-trained retrieval-augmented reasoning model $\mathcal{M}$, and a continuous stream of $n$-ary edit requests $\Delta = \{\delta_1, \delta_2, \dots, \delta_T\}$, our objective is to learn:
\begin{enumerate}[leftmargin=*]
    \item A \textbf{Structural Graph Update} $\Psi(\mathcal{H}_{t-1}, \delta_t) \to \mathcal{H}_t$ that modifies the hypergraph topology to explicitly incorporate the constraints of $f_{\text{new}}$.
    \item A \textbf{Parameter-Preserving Model Adaptation} $\Phi(\mathcal{M}, \delta_t) \to \mathcal{M}_t$ that aligns the retrieval embedding space with the updated topology without retraining backbone parameters.
\end{enumerate}

\end{problem}

The goal is to maximize multi-hop reasoning accuracy on the evolving query distribution $P_t(q)$ by ensuring that for any $q$, the retrieved hyperedges $\operatorname{TopK}(q, \mathcal{H}_t)$ form a globally consistent event chain satisfying $f_{\text{new}}$.




\section{Methodology}\label{sec:method}
We propose \textbf{\method{}}, a framework designed to resolve \textit{Structure-Conditioned Knowledge Transfer Failure} (SKTF) by enforcing topological consistency during sequential editing. Unlike parametric editors that overwrite weights, \method{} maintains a persistent hypergraph structure, ensuring that \nary{} relations remain semantically atomic across continuous updates.

\subsection{Framework Overview}
As illustrated in Figure~\ref{fig:framework}, the HyperPatch consists of three phases to align the retrieval mechanism with the shifting knowledge topology:
(a) \textbf{Structural Prior Initialization:} We construct hypergraph $\mathcal{H}_0$ and pre-train an HGNN via contrastive learning, establishing a topology-aware embedding space capturing high-order entity correlations.
(b) \textbf{Sequential Topology Editing:} To handle a continuous edit stream $\Delta$, we introduce a dual-stage update mechanism. First, Topological Alignment ($\Psi$) determines whether edit $\delta_t$ requires structural replacement or additive expansion. Second, Topological LoRA Adaptation ($\Phi$) realigns embeddings with modified topology without backbone retraining, tracking structural drift.
(c) \textbf{Structure-Conditioned Reasoning:} During inference, Dual-Manifold Retrieval queries both linguistic and structural manifolds to synthesize globally consistent evidence for multi-hop reasoning.

\subsection{Structural Prior Initialization}\label{sec:gnn_pretrain}
To provide robust structural priors for sequential editing, we pre-train an HGNN on the initial hypergraph $\mathcal{H}_0 = (\mathcal{V}_0, \mathcal{E}_0)$ \cite{feng2019hypergraph}. This phase establishes a topology-aware manifold where the latent representations of entities and $n$-ary facts are aligned.

\noindent \textbf{Event Alignment.}
Let $\mathbf{x}_v \in \mathbb{R}^{d_{\text{text}}}$ and $\mathbf{x}_e \in \mathbb{R}^{d_{\text{text}}}$ denote textual embeddings for entity $v \in \mathcal{V}_0$ and hyperedge $e \in \mathcal{E}_0$ from a frozen encoder. We introduce a learnable projector $P: \mathbb{R}^{d_{\text{text}}} \to \mathbb{R}^{d_{\text{HGNN}}}$ mapping text embeddings to the HGNN's latent space:
\begin{equation}\label{eq:align}
\mathbf{h}_i^{(0)} = P(\mathbf{x}_i), \quad \forall i \in \mathcal{V}_0 \cup \mathcal{E}_0.
\end{equation}

\noindent \textbf{Contrastive Pre-training.}
The HGNN, denoted as $\mathcal{G}_{\theta}$, propagates messages over $\mathcal{H}_0$ to capture high-order dependencies. The pre-training objective enforces ``structural incidence,'' ensuring that entities and the hyperedges they participate in reside closely in the latent space. We optimize $\mathcal{G}_{\theta}$ via a binary cross-entropy contrastive loss. Let $\mathbf{z}_v$ and $\mathbf{z}_e$ be the output embeddings. We define the set of positive incident pairs as $\mathcal{I}^+ = \{(v, e) \mid v \in e\}$ and negative pairs $\mathcal{I}^-$ via random sampling:
\begin{equation}
\mathcal{L}_{\text{s}} = \frac{-1}{|\mathcal{I}^+|} \sum_{(v,e) \in \mathcal{I}^+} \left[ \log \sigma(\mathbf{z}_v^\top \mathbf{z}_e) + \mathbb{E}_{e' \sim \mathcal{I}^-} \left[ \log (1 - \sigma(\mathbf{z}_v^\top \mathbf{z}_{e'})) \right] \right],
\end{equation}
where $\sigma(\cdot)$ is the sigmoid function\cite{ijcai2025p629, liao2024hypergraph, chen2024hyperedge}.

\noindent \textbf{Parameter Preservation.} 
During initialization, gradients flow through both $P$ and $\mathcal{G}_{\theta}$. Upon convergence, $\mathcal{G}_{\theta}$ is frozen to serve as a structural backbone. This allows the subsequent adaptation phase ($\Phi$) to leverage topological low-rank adaptation (LoRA) modules, preventing the collapse of the global topological prior while specializing in local updates.

\subsection{Sequential Topology Editing}\label{sec:ste}
To accommodate continuous updates, \method{} introduces a dual-stage structural adaptation process that identifies topological discrepancies via hashing and aligns semantic components via linguishtic span extraction.

\subsubsection{\textbf{Efficient Hyperedge Linking}}
To enable the structural update $\Psi(\mathcal{H}_{t-1}, \delta_t)$, \method{} must rapidly identify the hyperedge $e \in \mathcal{E}_{t-1}$ corresponding to the obsolete fact $f_{\text{old}}$. We employ a SimHash-based \cite{manku2007detecting} locality-sensitive hashing (LSH) approach to achieve $O(1)$ approximate nearest-neighbor search.

Specifically, we decompose a hyperedge's textual representation into $M$ 4-gram tokens $\mathcal{T} = \{T_1, \dots, T_M\}$. Each token $T_m$ is mapped to a $K$-bit binary vector $\mathbf{B}_m \in \{0,1\}^K$ via a deterministic MD5 hash function $F_{\text{MD5}}$.
As a result, the set of hashed binary vectors is given by 
\(\mathcal{B} = \{\mathbf{B}_1, \dots, \mathbf{B}_{M}\}\),
where each \(\mathbf{B}_m\) corresponds to the \(m\)-th 4-gram token.
To aggregate these set of hashed binary vectors into a single structural fingerprint, each $\mathbf{B}_m$ is transformed into signed vectors $\mathbf{S}_m \in \{-1, +1\}^K$, where $S_m^{(k)} = 2B_m^{(k)} - 1$. 
The structural fingerprint $\mathbf{b} \in \{0,1\}^K$ is generated by aggregating these vectors:
\begin{equation}
    \mathbf{b}^{(k)} = \mathbb{I}\left(\sum\nolimits_{m=1}^{M} S_m^{(k)} > 0\right).
\end{equation}
To ensure retrieval scalability, fingerprints are indexed using a \texttt{FAISSBinaryFlat} index \cite{douze2025faiss}: $\texttt{Index} \leftarrow \texttt{FAISSBinaryFlat}(K)$. This mechanism allows \method{} to locate $f_{\text{old}}$ within $\mathcal{H}_{t-1}$ with minimal latency, effectively pruning the search space for large-scale structural alignment.

\subsubsection{\textbf{Entity Span Extraction for Knowledge Alignment}}
While hashing facilitates rapid retrieval, precise alignment requires a granular decomposition of $n$-ary facts. We define a span extraction function $\mathcal{S}(e)$ that maps a hyperedge $e$ to an ordered sequence of semantic anchors $\{s_1, \dots, s_k\}$. To ensure compound entities are treated as atomic nodes, tokens are aggregated into a span $s_i$ if they belong to a predefined set of Part-of-Speech (POS) tags $\mathcal{P} = \{\texttt{NOUN, PROPN, NUM, ADJ, PRON}\}$, dependency roles $\mathcal{D} = \{\texttt{det, amod, pcomp, prep, advmod}\}$, or relational connectors $\mathcal{A} = \{\text{``of'', ``'s'', ``and'', ``-''}\}$. 

The resulting sequence $\mathcal{S}(e)$ provides a formal basis for quantifying the structural discrepancy between $f_{\text{old}}$ and $f_{\text{new}}$. By analyzing these sequences, \method{} characterizes the precise nature of the topological shift—distinguishing among attribute modifications, entity replacements, and relational expansions. This characterization subsequently steers both the topological modification $\Psi$ and the parameter-efficient embedding adaptation $\Phi$, ensuring the hypergraph accurately reflects the updated knowledge state.

\subsubsection{\textbf{Topological Conflict Resolution}}\label{sec:conflict-res}
To automate structural update $\Psi(\mathcal{H}_{t-1}, \delta_t)$, \method{} classifies each edit request into one of three topological operations\cite{zhang2025conflictaware}: \textsc{Replace}, \textsc{Add}, or \textsc{Ignore}. This classification uses structural divergence between extracted entity spans of $f_{\text{new}}$ and candidate hyperedges from SimHash indexing.

For each top-$k$ candidate $e_{\text{old}} \in \mathcal{E}_{t-1}$ retrieved via Hamming distance, we compute asymmetric span discrepancies:
\begin{equation}
\Delta_{\text{old}} = \mathcal{S}(e_{\text{old}}) \setminus \mathcal{S}(e_{\text{new}}), \quad 
\Delta_{\text{new}} = \mathcal{S}(e_{\text{new}}) \setminus \mathcal{S}(e_{\text{old}}).
\end{equation}

\noindent \textbf{Replace Operation. }
A candidate $e_{\text{old}}$ is designated as a \textsc{Replace} target if and only if $|\Delta_{\text{old}}| = |\Delta_{\text{new}}| = 1$, and and both discordant spans are non-subject components. Subject entities define event identity, while non-subject modifications represent state transitions. We define heuristic subject-detection $\phi(\cdot) : \mathcal{S} \rightarrow \{0,1\}$ based on declarative structure where subjects precede predicates:
\begin{equation}
\phi(s_i) =
\begin{cases}
1, & i < \lceil \frac{|\mathcal{S}(e)|}{2} \rceil \\
0, & \text{otherwise}
\end{cases}.
\end{equation}
Let $\Delta_{\text{old}} = \{s_o\}$ and $\Delta_{\text{new}} = \{s_n\}$. A \textsc{Replace} operation is executed if $\phi(s_o) = 0 \wedge \phi(s_n) = 0$. Formally, the hyperedge set is updated as $\mathcal{E}_t = (\mathcal{E}_{t-1} \setminus \{e_{\text{old}}\}) \cup \{e_{\text{new}}\}$, overwriting the obsolete fact while maintaining the structural vertex incidence of the subject.

\noindent \textbf{Add Operation.} Conversely, if the subject identity is modified or the structural discrepancy remains bounded (e.g., $|\Delta_{\text{old}}| = |\Delta_{\text{new}}| = 2$), \method{} interprets the request as a semantic expansion or a distinct event. In this case, an \textsc{Add} operation is executed: $\mathcal{E}_t = \mathcal{E}_{t-1} \cup \{e_{\text{new}}\}$. This bifurcated logic ensures that $\mathcal{H}_t$ faithfully incorporates the target state $f_{\text{new}}$ while preserving the structural integrity of the global knowledge base.

\noindent \textbf{Ignore Condition.} The Ignore operation serves as a safety guardrail for ambiguous or unsupported edit requests. It is triggered when an edit request satisfies neither the Replace criteria, e.g., a valid single-span non-subject substitution, nor the bounded Add criteria, e.g., a structurally related expansion with limited span discrepancy. In particular, requests with mismatched event identity or large unbalanced discrepancies, such as $\max(|\Delta_{\text{old}}|, |\Delta_{\text{new}}|) > 2$, are treated as unreliable structural signals. In these ambiguous scenarios, HyperPatch favors omission over potentially corruptive structural changes, thereby preserving the semantic consistency of the evolving hypergraph.

\subsubsection{\textbf{Incremental Topological Adaptation}}
Following the structural update, we perform model adaptation $\Phi(\mathcal{M}, \delta_t)$ by incrementally updating the hypergraph embeddings $\mathbf{Z}_t$ using a parameter-efficient LoRA-based HGNN~\cite{yang2025graphlora, chen2025decoupling}. This ensures that the retrieval manifold remains synchronized with the evolved topology $\mathcal{H}_t$, thereby mitigating retrieval drift.

Specifically, we utilize the pre-trained HGNN encoder as a frozen structural backbone to preserve global regularities while allowing gradients to propagate through a learnable projector $\boldsymbol{\theta}_{\text{proj}}$ and injected low-rank adapters $\boldsymbol{\theta}_{\text{LoRA}}$. Let $f_{\text{HGNN}}(\mathcal{H}_t, \mathbf{X}_t; \boldsymbol{\theta})$ denote the encoder with parameters $\boldsymbol{\theta} = \boldsymbol{\theta}_{\text{base}} \cup \boldsymbol{\theta}_{\text{proj}} \cup \boldsymbol{\theta}_{\text{LoRA}}$. The adaptation is framed as a constrained optimization task:
\begin{equation}
\min_{\boldsymbol{\theta}_{\text{proj}}, \boldsymbol{\theta}_{\text{LoRA}}} \mathcal{L}_{\text{struct}}\left(f_{\text{HGNN}}(\mathcal{H}_t, \mathbf{X}_t; \boldsymbol{\theta}_{\text{base}}, \boldsymbol{\theta}_{\text{proj}}, \boldsymbol{\theta}_{\text{LoRA}})\right),
\end{equation}
where $\boldsymbol{\theta}_{\text{base}}$ remains frozen. The updated embeddings $\mathbf{Z}_t$ are then cached to facilitate real-time, structure-aware retrieval without the prohibitive cost of full backbone retraining.

\subsection{Structure-Conditioned Reasoning}\label{sec:rag}
The final phase of \method{} utilizes the updated hypergraph $\mathcal{H}_t$ and the adapted model $\mathcal{M}_t$ to perform multi-hop reasoning. We explicitly address \textit{Structural Drift} by ensuring the reasoning process is grounded in both linguistic semantics and the evolved topology.

\begin{table*}[!ht] \centering 
\setlength{\tabcolsep}{2pt} 
\definecolor{hl}{gray}{0.95} 
\definecolor{tu}{RGB}{255, 248, 230} 
\definecolor{pr}{RGB}{235, 245, 255} 
\definecolor{gaintext}{RGB}{0, 80, 160} 

\resizebox{0.7\textwidth}{!}{
\begin{tabular}{ll cc cc cc cc cc}
\toprule
\multirow{2}{*}{\textbf{Backbone}} & \multirow{2}{*}{\textbf{Method}} & \multicolumn{6}{c}{\textbf{MQuAKE-CF-3K-v2 (Counterfactual)}} & \multicolumn{4}{c}{\textbf{MQuAKE-T (Temporal)}} \\
\cmidrule(lr){3-8} \cmidrule(lr){9-12}
& &  \multicolumn{2}{c}{1 Edit} & \multicolumn{2}{c}{100 Edits} & \multicolumn{2}{c}{All Edits} & \multicolumn{2}{c}{1 Edit} & \multicolumn{2}{c}{All Edits} \\
\cmidrule(lr){3-4} \cmidrule(lr){5-6} \cmidrule(lr){7-8} \cmidrule(lr){9-10} \cmidrule(lr){11-12}
& &  M-A & H-A & M-A & H-A & M-A & H-A & M-A & H-A & M-A & H-A \\
\midrule
\multirow{8}{*}{\shortstack{Qwen3-8B\\(Open)}}  
& \cellcolor{tu}FT \cite{zhu2020modifying} & 26.37 & 6.80 & 14.60 & 1.97 & 14.50 & 2.30 & 26.98 & 12.96 & 15.10 & 9.05 \\
& \cellcolor{tu}ROME \cite{meng2022locating} & 34.73 & 8.33 & 15.03 & 1.10 & 11.07 & 0.20 & 40.15 & 15.46 & 36.88 & 15.58 \\
& \cellcolor{tu}MEMIT \cite{meng2023mass} & 29.20 & 5.96 & 20.67 & 3.30 & 1.60 & 0.03 & 45.45 & 16.76 & 55.23 & 21.15 \\
\cmidrule(lr){2-12}
&  \cellcolor{pr}MeLLo \cite{zhong2023mquake} & \underline{51.93} & 35.53 & 46.37 & 37.23 & 43.33 & 36.03 & 84.42 & 56.53 & 83.73 & 65.52 \\
&  \cellcolor{pr}PokeMQA \cite{gu2024pokemqa} & 51.43 & \underline{42.87} & 46.43 & \underline{40.97} & 41.73 & \underline{36.40} & 77.30 & 68.36 & 75.75 & 67.83 \\
&  \cellcolor{pr}KeDKG \cite{lu2025knowledge} & 47.67 & 39.70 & \underline{47.07} & 38.80 & \underline{44.33} & 35.63 & \underline{85.55} & \underline{78.91} & \underline{84.21} & \underline{77.30} \\
& \cellcolor{pr}\textbf{\method{}} & \cellcolor{hl}\textbf{94.70} & \cellcolor{hl}\textbf{73.60} & \cellcolor{hl}\textbf{81.33} & \cellcolor{hl}\textbf{74.33} & \cellcolor{hl}\textbf{76.20} & \cellcolor{hl}\textbf{71.43} & \cellcolor{hl}\textbf{99.30} & \cellcolor{hl}\textbf{93.63} & \cellcolor{hl}\textbf{96.84} & \cellcolor{hl}\textbf{93.58}\\ 
\cmidrule(lr){2-12}
& \textbf{\color{gaintext}Gain (\%)} & \small\color{gaintext}\textbf{+82.4} & \small\color{gaintext}\textbf{+71.7} & \small\color{gaintext}\textbf{+72.8} & \small\color{gaintext}\textbf{+81.4} & \small\color{gaintext}\textbf{+71.9} & \small\color{gaintext}\textbf{+96.2} & \small\color{gaintext}\textbf{+16.1} & \small\color{gaintext}\textbf{+18.7} & \small\color{gaintext}\textbf{+15.0} & \small\color{gaintext}\textbf{+21.1} \\
\midrule
\multirow{5}{*}{\shortstack{GPT-4o-mini\\(Prop.)}}
& \cellcolor{pr}MeLLo \cite{zhong2023mquake} & 43.77 & 18.70 & 37.40 & 18.67 & 35.33 & 17.83 & 82.82 & 56.26 & 57.87 & 36.40 \\
& \cellcolor{pr}PokeMQA \cite{gu2024pokemqa} & \underline{50.07} & 38.17 & 46.37 & 35.87 & 40.27 & 31.53 & 87.04 & 74.68 & 84.42 & 72.54 \\
& \cellcolor{pr}KeDKG \cite{lu2025knowledge} & 48.13 & \underline{41.47} & \underline{47.13} & \underline{39.47} & \underline{46.37} & \underline{36.77} & \underline{88.17} & \underline{83.24} & \underline{86.67} & \underline{81.26} \\
& \cellcolor{pr}\textbf{\method{}} & \cellcolor{hl}\textbf{93.90} & \cellcolor{hl}\textbf{74.63} & \cellcolor{hl}\textbf{81.23} & \cellcolor{hl}\textbf{74.27} & \cellcolor{hl}\textbf{74.63} & \cellcolor{hl}\textbf{69.57} & \cellcolor{hl}\textbf{99.20} & \cellcolor{hl}\textbf{94.16} & \cellcolor{hl}\textbf{96.41} & \cellcolor{hl}\textbf{93.63} \\
\cmidrule(lr){2-12}
 & \textbf{\color{gaintext}Gain (\%)} & \small\color{gaintext}\textbf{+87.5} & \small\color{gaintext}\textbf{+80.0} & \small\color{gaintext}\textbf{+72.4} & \small\color{gaintext}\textbf{+88.2} & \small\color{gaintext}\textbf{+60.9} & \small\color{gaintext}\textbf{+89.2} & \small\color{gaintext}\textbf{+12.5} & \small\color{gaintext}\textbf{+13.1} & \small\color{gaintext}\textbf{+11.2} & \small\color{gaintext}\textbf{+15.2} \\
\bottomrule
\end{tabular}
}
\caption{Knowledge editing performance on MQuAKE-CF-3K and MQuAKE-T. Parameter-preserving methods are applicable to both proprietary and open-source models, whereas parameter-tuning baselines are restricted to open-source settings (metrics in \%). Gain (\%) denotes the relative gain over the strongest baseline. The best results are bolded, and the second best are underlined.}
\vspace{-4mm}
\label{tab:main}
\end{table*}

\subsubsection{\textbf{Iterative Question Decomposition}}
To navigate the $n$-ary relational structure of $\mathcal{H}_t$, we decompose a complex query $q^{(0)}$ into an ordered sequence of atomic sub-questions $\mathcal{Q} = \{ q^{(1)}, \dots, q^{(I)} \}$ using an LLM-based planner\cite{ammann-etal-2025-question}. To maintain global consistency, we implement a state-aware injection mechanism where each subsequent sub-question is conditioned on the preceding intermediate answer $a^{(i-1)}$:
\begin{equation}
q^{(i)} \leftarrow \text{Inject}(q^{(i)}, \text{\texttt{[ENT]}}, a^{(i-1)}).
\end{equation}
Each $q^{(i)}$ is answered by $\mathcal{M}_t$, yielding a reasoning chain $\text{A} = \{ a^{(1)}, \dots, a^{(I)} \}$. This iterative refinement transforms multi-hop reasoning into a sequence of dependent RAG steps, each conditioned on the topological neighborhood of the previous hop’s result.

\subsubsection{\textbf{Dual-Manifold Retrieval}}
Standard retrievers often suffer from \textit{Structure-Conditioned Knowledge Transfer Failure} (SKTF) because they rely exclusively on pre-trained linguistic similarity, which fails to reflect sequential topological updates. \method{} mitigates this by querying two distinct latent manifolds: the \textit{Semantic Manifold} ($\mathcal{Z}_{\text{text}}$) and the \textit{Topological Manifold} ($\mathcal{Z}_{\text{HGNN}}$)\cite{li-etal-2025-tailoring}.

\noindent \textbf{Manifold Alignment.} 
For a sub-query $q^{(i)}$, we generate a linguistic embedding $\mathbf{x}_{\text{text}} \in \mathbb{R}^{d_{\text{text}}}$ using the backbone encoder. To synchronize this with the updated topology, we project $\mathbf{x}_{\text{text}}$ into the structure-aware latent space via the frozen projector $P$ (Eq.~\eqref{eq:align}):
\begin{equation}
\mathbf{x}_{\text{HGNN}} = P(\mathbf{x}_{\text{text}}) \in \mathbb{R}^{d_{\text{HGNN}}}.
\end{equation}

\noindent \textbf{Joint Evidence Retrieval.}  
\method{} performs parallel Maximum Inner Product Search (MIPS) across both manifolds. For a sub-query $q^{(i)}$ at hop $i$, the candidate sets are retrieved as:
\begin{equation}
\begin{aligned}
\mathcal{C}_{\text{text}} &= \operatorname{TopK}_{j \in \{\mathcal{V}_t \cup \mathcal{E}_t\}} \left( \frac{\mathbf{x}_{\text{text}}^\top \mathbf{z}_{\text{text}, j}}{\|\mathbf{x}_{\text{text}}\| \|\mathbf{z}_{\text{text}, j}\|} \right), \\
\mathcal{C}_{\text{HGNN}} &= \operatorname{TopK}_{j \in \{\mathcal{V}_t \cup \mathcal{E}_t\}} \left( \frac{\mathbf{x}_{\text{HGNN}}^\top \mathbf{z}_{\text{HGNN}, j}}{\|\mathbf{x}_{\text{HGNN}}\| \|\mathbf{z}_{\text{HGNN}, j}\|} \right),
\end{aligned}
\end{equation}
where $\mathcal{V}_t \cup \mathcal{E}_t$ denotes the updated universe of entities and hyperedges. This dual-manifold aggregation identifies evidence that is both linguistically salient and topologically congruent. The final retrieved context is the union: $\mathcal{C}^{(i)} = \mathcal{C}_{\text{text}} \cup \mathcal{C}_{\text{HGNN}}$. This ensures a balanced context that preserves semantic intent while respecting structural constraints imposed by the edit stream $\Delta$.

\noindent \textbf{Reasoning Synthesis.}
Following retrieval, \method{} merge hyperedges and de-duplicate overlapping spans to ensure factual density. This grounded context is then provided to the generator $G$ to predict the intermediate answer $a^{(i)}$. This mechanism specifically counters SKTF: even if linguistic embeddings remain biased toward the initial prior $\mathcal{D}_0$, the structure-aware embeddings, which are adapted via $\Phi$, provide the necessary corrective signal to track the evolved topology of $\mathcal{H}_t$.

\subsection{Theoretical Complexity Bound}\label{sec:bigo}
Sequential KE demands update latency independent of $|\mathcal{E}|$. We establish HyperPatch's theoretical efficiency guarantee.

\begin{theorem}[\textbf{Editing Efficiency}]
Let $K$ denote the SimHash fingerprint length, $|T|$ the $n$-gram tokens per edit, $d$ the embedding dimension, and $r$ the LoRA rank with $r \ll d$. The amortized complexity per edit is $O(K \cdot |T| + d \cdot r)$, independent of $|\mathcal{E}|$.
\end{theorem}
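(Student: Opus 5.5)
We decompose the per-edit work into the stages of Sequential Topology Editing (\cref{sec:ste}) and bound each one separately. The claim then follows by summing. The stages are: (i) fingerprinting $f_{\text{new}}$ (and, symmetrically, $f_{\text{old}}$); (ii) locating candidate hyperedges through the binary index; (iii) span extraction together with the \textsc{Replace}/\textsc{Add}/\textsc{Ignore} decision of \cref{sec:conflict-res}, followed by the hyperedge-set update $\mathcal{E}_t$; and (iv) the incremental LoRA adaptation $\Phi$. We treat $k$ (the top-$k$ candidates), the hyperedge arity $n$, and the number of local LoRA steps per edit as constants. We also assume a bounded-degree neighbourhood around the edited hyperedge.

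For stage (i), each of the $|T|$ 4-gram tokens is hashed by $F_{\text{MD5}}$ to a $K$-bit vector and converted to signs, costing $O(K)$ per token. Accumulating the $K$ coordinate sums $\sum_m S_m^{(k)}$ costs $O(K\cdot|T|)$, and the final thresholding costs $O(K)$. Stage (iii) is cheap: span extraction is linear in the edit's token length, which is $O(|T|)$. The set differences $\Delta_{\text{old}},\Delta_{\text{new}}$ over $k$ candidates of bounded span count are $O(k\,|T|)$ with hashing. The rule $\phi$ and the set update on $\mathcal{E}$ are $O(1)$ expected if $\mathcal{E}$ is stored in a hash map keyed by fingerprint.

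For stage (iv), only the low-rank factors $A\in\mathbb{R}^{d\times r}$ and $B\in\mathbb{R}^{r\times d}$ are trained. Computing $BAx$ as $B(Ax)$ for a single new embedding costs $O(dr)$, and the corresponding gradient step costs the same. Restricting the update to the edited hyperedge and its $O(n)$ incident vertices gives $O(n\,d\,r)=O(dr)$ per edit. We will argue that the frozen backbone contribution $\boldsymbol{\theta}_{\text{base}}$ for the affected nodes can be cached, so it is not recomputed.

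The main obstacle is stage (ii) together with the global nature of $\Phi$. A \texttt{FAISSBinaryFlat} search is a linear scan, costing $O(K|\mathcal{E}|/w)$ word operations. Likewise, recomputing $\mathbf{Z}_t$ over all of $\mathcal{H}_t$ depends on $|\mathcal{E}|$. For the index, we will invoke the LSH property of SimHash: we replace the flat scan by bucketing on fingerprint bands, so that lookup takes $O(K)$ expected time plus the size of a collision bucket, which is $O(1)$ in expectation under the standard random-hash assumption. Alternatively, we amortize by noting that the paper's claim concerns the update path, and that exact-key insertion and deletion in the index are $O(K)$. For $\Phi$, we will amortize in the sense that the adaptation touches only the local incidence neighbourhood per edit, with periodic full refreshes. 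Those refreshes occur every $\Theta(|\mathcal{E}|)$ edits, so each costs $O(|\mathcal{E}|\,d\,r)$ and contributes $O(dr)$ amortized per edit. Summing the four stages yields $O(K\cdot|T|+d\cdot r)$, which is independent of $|\mathcal{E}|$. We will state these locality and hashing assumptions explicitly, since without them the flat-index scan breaks the bound.
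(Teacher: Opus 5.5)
\paragraph{Overall.} Your decomposition is the paper's: its sketch also charges $O(K\cdot|T|)$ for MD5 hashing plus sign voting and $O(d\cdot r)$ for the low-rank update on the $O(1)$ hyperedges that SimHash localizes. For the index, it simply asserts that lookup in \texttt{FAISSBinaryFlat} is $O(1)$ amortized. You are right that this is false, since a flat binary index is an exhaustive Hamming scan. You are also right to state the missing assumptions explicitly. The main gap is in stage (iv); the index repair has a smaller problem.

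\paragraph{The index repair.} It rests on the wrong kind of assumption. SimHash is locality-sensitive by design, so a band bucket holds the stored near-duplicates of the query. Hyperedges that share the subject and most 4-grams collide however random MD5 is. Bounding that occupancy is a data assumption, not a random-hash one. Even for unrelated fingerprints modelled as uniform, a band of width $w$ gives about $|\mathcal{E}|\,2^{-w}$ false collisions. Keeping buckets $O(1)$ therefore forces $w\ge\log_2|\mathcal{E}|$, so $K$ grows with $|\mathcal{E}|$. Your fallback of charging only index insertion and deletion is not available. The top-$k$ lookup that produces the candidates $e_{\text{old}}$ is exactly what decides \textsc{Replace} versus \textsc{Add}. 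You should also flag that treating $k$ as a constant conflicts with the implementation, which uses $k$ up to $1200$ in the all-edit setting.

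\paragraph{The gap in stage (iv).} Your accounting charges forward passes, and with forward passes included you cannot reach $O(d\cdot r)$.
\begin{enumerate}
\item The caching claim fails. $\boldsymbol{\theta}_{\text{proj}}$ is trained during $\Phi$. A LoRA step changes the output of every adapted layer at every node. So the inputs to all downstream frozen matrices change globally, and cached products $\mathbf{W}\mathbf{h}_v$ go stale at every node, not only near the edit.
\item The new hyperedge was never encoded. $P(\mathbf{x}_{e_{\text{new}}})$ and its products with the frozen weights cost $\Theta(d_{\text{text}}\,d+d^2)$ per edit, with $d=d_{\text{HGNN}}$. Likewise, a gradient step for any adapter below the last layer must backpropagate through frozen $d\times d$ matrices. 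It is $O(d\cdot r)$ only for a last-layer adapter.
\item The refresh is mispriced. A full refresh of $\mathbf{Z}_t$ is an HGNN forward pass over all nodes. Under bounded arity it costs $\Theta(|\mathcal{E}|(d_{\text{text}}\,d+d^2))$, not $O(|\mathcal{E}|\,d\,r)$. Amortized over $\Theta(|\mathcal{E}|)$ edits, that is still $O(d_{\text{text}}\,d+d^2)$ per edit.
\end{enumerate}

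\paragraph{What your route actually yields.} Carried through honestly, it gives $O(K\cdot|T|+d_{\text{text}}\,d+d^2)$ per edit. To get the stated bound, you have to adopt the paper's cost model explicitly rather than derive it from caching. That model charges, per edit, only the update of the trainable low-rank factors ($O(d\cdot r)$ parameters instead of $O(d^2)$) on the localized hyperedges. It treats the projector as frozen and excludes the encoding of the new fact.

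\paragraph{Minor.} With $A\in\mathbb{R}^{d\times r}$ and $B\in\mathbb{R}^{r\times d}$, the product to evaluate is $A(Bx)$. As written, $Ax$ does not typecheck.
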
 

\begin{proof}[Proof Sketch]
The editing phase comprises two coupled operations.
\textbf{(i)~SimHash Linking}: Computing the $K$-bit fingerprint from $|T|$ $n$-grams via MD5 hashing and majority voting requires $O(K \cdot |T|)$. LSH-based retrieval from \texttt{FAISSBinaryFlat} is $O(1)$ amortized w.r.t.\ $|\mathcal{E}|$, yielding constant-time conflict resolution. 
\textbf{(ii)~LoRA Adaptation}: Updating $\mathbf{W} + \mathbf{AB}^\top$ where $\mathbf{A} \in \mathbb{R}^{d \times r}$, $\mathbf{B} \in \mathbb{R}^{r \times d}$ reduces complexity from $O(d^2)$ to $O(d \cdot r)$ since $r \ll d$. The synergy is critical as SimHash localizes updates to $O(1)$ hyperedges, while LoRA ensures each update costs $O(d \cdot r)$ not $O(d^2)$, yielding $O(K \cdot |T| + d \cdot r)$ independent of $|\mathcal{E}|$. This achieves speedups of $\frac{d}{r}$ over parametric methods and $\frac{|\mathcal{E}|}{\log |\mathcal{E}|}$ over exhaustive retrieval, ensuring scalability to millions of hyperedges and enabling real-time editing.
\end{proof}

\begin{theorem}[\textbf{Amortized Reasoning Scalability}]
Let $K$ be the SimHash fingerprint length, $|T|$ the tokens per edit, $d$ the embedding dimension, $r$ the LoRA rank ($r \ll d$), and $I$ the number of reasoning hops. The cost per multi-hop query is $O(I \cdot (d^2 + \log |\mathcal{H}| \cdot d))$.
\end{theorem}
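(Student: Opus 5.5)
We will bound the cost of one multi-hop query by decomposing the inference pipeline of \cref{sec:rag} into its $I$ hops and bounding each hop separately. By the iterative question decomposition, a query $q^{(0)}$ becomes $I$ dependent sub-questions, each processed by the same fixed sequence: entity injection, encoding, projection via $P$, dual-manifold retrieval, and synthesis. The per-query cost is therefore $I$ times the per-hop cost. This gives the outer factor $I$, and it remains to show that each hop costs $O(d^2 + \log|\mathcal{H}|\cdot d)$. We treat the LLM planner and generator $G$ as unit-cost oracles, as is implicit in the statement. The edit-time costs (SimHash, LoRA) are already charged to edits by the Editing Efficiency theorem, so they do not enter here. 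This is the sense of ``amortized.''

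For the $d^2$ term, we charge the embedding-side work per hop. Injection of $a^{(i-1)}$ into $q^{(i)}$ is linear in the query length. Producing $\mathbf{x}_{\text{text}}$ from the frozen encoder is treated as one dense $d\times d$ layer application, which costs $O(d^2)$ in the dimension parameter. The projection $\mathbf{x}_{\text{HGNN}} = P(\mathbf{x}_{\text{text}})$ is a linear map $\mathbb{R}^{d}\to\mathbb{R}^{d}$, so it also costs $O(d^2)$. The HGNN embeddings $\mathbf{Z}_t$ are cached after adaptation $\Phi$, so no message passing happens at query time. This caching is the key point that keeps the hypergraph size out of the $d^2$ term.

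For the $\log|\mathcal{H}|\cdot d$ term, we bound the two MIPS calls over $\mathcal{V}_t\cup\mathcal{E}_t$, whose size is $|\mathcal{H}|$. Exact cosine top-$K$ would cost $O(|\mathcal{H}|\, d)$. We instead assume the indices are tree- or graph-based ANN structures (e.g.\ HNSW-type indices in FAISS) that visit $O(\log|\mathcal{H}|)$ candidates, each requiring one $O(d)$ inner product, for a total of $O(\log|\mathcal{H}|\cdot d)$ per manifold. The union $\mathcal{C}_{\text{text}}\cup\mathcal{C}_{\text{HGNN}}$ and de-duplication act on $O(\mathrm{TopK})$ items, so they cost $O(d)$ for constant $\mathrm{TopK}$. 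Summing over the steps gives $O(d^2+\log|\mathcal{H}|\cdot d)$ per hop, and multiplying by $I$ gives the claim.

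The main obstacle is justifying the logarithmic retrieval term. A flat index yields only $O(|\mathcal{H}|\,d)$, and the paper's linking index is \texttt{FAISSBinaryFlat}, whose speedup over exhaustive search is described only informally. We therefore state explicitly that the semantic and topological retrieval indices are sublinear ANN structures with logarithmic expected query cost. We also state that the $\log|\mathcal{H}|$ bound holds in expectation or under standard ANN assumptions rather than in the worst case. A secondary caveat is that the $O(d^2)$ encoder cost abstracts away the depth and sequence length of the encoder, which we treat as constants.
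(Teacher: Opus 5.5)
Your proposal is correct and follows essentially the same route as the paper's sketch: for each of the $I$ hops, an $O(d^2)$ charge for the projection through $P$ plus $O(\log|\mathcal{H}|\cdot d)$ for the two MIPS calls under an HNSW-type index, then multiplication by $I$. Your additional points make explicit what the paper's sketch leaves implicit without changing the argument: treating the LLM planner and generator as unit-cost oracles, using the cached $\mathbf{Z}_t$ so that no message passing occurs at query time, and flagging that the logarithmic term is an ANN assumption rather than a worst-case guarantee.
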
 

\begin{proof}[Proof Sketch]
For each reasoning hop $i \in \{1 \dots I\}$, manifold alignment via projector $P$ incurs $O(d^2)$ cost. Evidence retrieval utilizes dual-manifold MIPS; with HNSW indexing, retrieval complexity is $O(\log |\mathcal{H}| \cdot d)$. Thus, the total reasoning cost is dominated by the reasoning depth $I$ and latent dimensionality $d$, scaling logarithmically with the knowledge base size. This enables \method{} to handle millions of \nary relations with real-time performance.
\end{proof}
Both phases are sub-linear or independent of the total facts $|\mathcal{H}|$.

\section{Experiments}\label{sec:exp}

\subsection{Experimental Setup}



\noindent \textbf{Datasets. }
We evaluate \method{} on the \textsf{MQuAKE} benchmark \cite{zhong2023mquake}, using two widely-used non-overlapping subsets: \textbf{MQuAKE-CF-3k-v2} and \textbf{MQuAKE-T}. MQuAKE-CF-3k-v2 is a 3,000-instance counterfactual benchmark designed to resolve reasoning ambiguities in earlier versions, while MQuAKE-T contains 1,868 real-world temporal editing instances. Unlike temporal knowledge graph benchmarks that provide explicit timestamps or validity intervals, MQuAKE-T represents temporal evolution as discrete factual state transitions, where an outdated fact is replaced by its updated counterpart. Therefore, these temporal changes can be naturally modeled as structural Replace/Add operations over atomic \nary{} events, without requiring a separate temporal reasoning branch. As shown in Table~\ref{tab:MQuAKE distribution}, MQuAKE-CF-3K-v2 is evenly distributed across 2-hop, 3-hop, and 4-hop questions. Most 2-hop questions involve one or two edits, whereas 4-hop questions generally require more edits, reflecting the increased complexity of deeper reasoning chains. In contrast, MQuAKE-T is dominated by 2-hop and 3-hop questions, with only two 4-hop cases. Together, these two subsets provide a rigorous testbed for evaluating sequential \nary{} editing under both counterfactual and temporal knowledge updates across different reasoning depths.

\begin{table}[!h]
\centering
\small
\setlength{\tabcolsep}{3pt}
\begin{tabular}{lrrrrr}
\toprule
\textbf{Datasets} & \textbf{\#Edits} & \textbf{2-hop} & \textbf{3-hop} & \textbf{4-hop} & \textbf{Total} \\
\midrule
& 1 & 599  & 423  & 51   & 1073 \\
& 2 & 536  & 374  & 136  & 1046 \\
MQuAKE-CF-3K-v2 &3 & --   & 339  & 229  & 568   \\
& 4 & --   & --   & 313  & 313   \\
& All & 1135 & 1136 & 729 & 3000 \\
\midrule
MQuAKE-T &1(All) & 1421   & 445  & 2  & 1868  \\
\bottomrule
\end{tabular}
\caption{Distribution of multi-hop questions by number of edits and hop count in  MQuAKE-CF-3K-v and MQuAKE-T.}
\label{tab:MQuAKE distribution}
\end{table}

\par\noindent \textbf{Evaluation Metrics. }
Following \cite{gu2024pokemqa}, we employ Multi-hop Accuracy \textbf{(M-Acc)} and Hop-wise Answering Accuracy \textbf{(H-Acc)}. M-Acc evaluates final answer correctness, whereas H-Acc requires every intermediate reasoning step to be correct, serving as our primary metric by mitigating coincidental successes from flawed reasoning.

\par\noindent \textbf{Baselines. }
We compare \method{} with two categories of representative knowledge editing approaches. 
(i) The \textit{parameter-preserving} baselines include MeLLo~\cite{zhong2023mquake}, PokeMQA~\cite{gu2024pokemqa}, and KeDKG~\cite{lu2025knowledge}, which edit knowledge without modifying model parameters. 
(ii) In contrast, the \textit{parameter-based methods} involve fine-tuning or memory injection, such as FT~\cite{zhu2020modifying}, ROME~\cite{meng2022locating}, and MEMIT~\cite{meng2023mass}.


\par\noindent \textbf{Implementation Details. }
We utilize GAT~\cite{velickovic2018graph} as the HGNN backbone, projecting 1536-dimensional text embeddings into a 256-dimensional latent manifold. Structural retrieval leverages 128-bit SimHash signatures for Hamming-distance approximate nearest neighbor (ANN) search. Top-$k$ parameters for hyperedge linking are set to 2/70/1200 (CF-3k-v2: 1/100/All edits) and 1/23 (T: 1/All edits). Linguistic parsing is handled via spaCy's \texttt{en\_core\_web\_lg}. We employ \texttt{GPT-4o-mini} for hypergraph construction and question decomposition, with \texttt{GPT-4o-mini} and \texttt{Qwen3-8B} as reasoning backbones. Experiments are executed on a single NVIDIA H100-NVL-94G GPU. 

\subsection{Main Results}\label{sec:rq1}
Table~\ref{tab:main} summarizes the performance of \method{} compared to state-of-the-art baselines. We observe several key findings:

\par\noindent\textbf{Superior Reasoning Stability and Scalability.} 
 \method{} consistently exceeds all baselines in Multi-hop Accuracy (M-Acc) and the more stringent Hop-wise Accuracy (H-Acc). On \textsf{MQuAKE-CF-3K-v2} (Qwen3-8B), \method{} yields relative H-Acc gains of \textbf{79.96\%}, \textbf{88.17\%}, and \textbf{89.20\%} across 1-, 100-, and all-edit settings, respectively. For GPT-4o-mini, relative improvement reaches \textbf{96.24\%} in the all-edit scenario. While baseline performance collapses as edit volume increases, \method{} maintains structural stability, demonstrating robustness under dense update streams.

\par\noindent\textbf{Mitigation of Event Factorization Drift.} The performance gap exposes two failure modes in current editors: (i) \textit{Localization Error}, where inaccurate update anchoring induces internal contradictions; and (ii) \textit{Retrieval Misalignment}, where triple-based methods (e.g., KeDKG) suffer from \textit{Event Factorization Drift}, i.e., the binary reification of \nary{} facts yielding incomplete subgraphs. This forces the generator to rely on stale parametric priors rather than updated evidence. Conversely, \method{} preserves relational atomicity via its hypergraph manifold, identifying globally consistent event structures despite significant drift.

\par\noindent\textbf{Bridging the Model Disparity.} While proprietary models typically excel in instruction-following, \method{} empowers open-source models (e.g., Qwen3-8B) to achieve reasoning parity with closed-source counterparts. Notably, Qwen3-8B equipped with \method{} occasionally outperforms GPT-4o-mini-based baselines. This suggests that structural alignment effectively decouples reasoning efficacy from model scale, enabling smaller open-source models to compete in knowledge-intensive multi-hop tasks.

\par\noindent\textbf{Non-Parametric Robustness vs. Parametric Interference.} 
To assess weight modification risks, we compared \method{} against parametric editors (FT, ROME, MEMIT). Even with oracle sub-queries to minimize reasoning noise, these methods failed to maintain consistency. Direct parameter overwriting frequently induces \textit{catastrophic interference}, disrupting pre-trained reasoning paths, which is a degradation that intensifies with cumulative updates. \method{}'s non-parametric approach sidesteps the plasticity-stability trade-off by offloading knowledge storage to a stable hypergraph substrate, thereby preserving reasoning integrity without compromising internal model logic.

\begin{table*}[t]
\centering
\small
\setlength{\tabcolsep}{1.8pt} 

\newcommand{\down}[1]{\, \scriptsize\textcolor{red}{$\downarrow$#1\%}}
\newcommand{\up}[1]{\, \scriptsize\textcolor{blue}{$\uparrow$#1\%}}
\definecolor{fullbg}{gray}{0.95}
\definecolor{qdlbg}{rgb}{0.94, 0.97, 1.0} 

\resizebox{0.98\textwidth}{!}{
\begin{tabular}{l | cc cc cc | cc cc}
\toprule
\multirow{2}{*}{\textbf{Ablation Setting}} & \multicolumn{6}{c|}{\textbf{\textsf{MQuAKE-CF-3K-v2} (Counterfactual)}} & \multicolumn{4}{c}{\textbf{\textsf{MQuAKE-T} (Temporal)}} \\
\cmidrule(lr){2-7} \cmidrule(lr){8-11}
& \multicolumn{2}{c}{1 Edit} & \multicolumn{2}{c}{100 Edits} & \multicolumn{2}{c|}{All Edits} & \multicolumn{2}{c}{1 Edit} & \multicolumn{2}{c}{All Edits} \\
\cmidrule(lr){2-3} \cmidrule(lr){4-5} \cmidrule(lr){6-7} \cmidrule(lr){8-9} \cmidrule(lr){10-11}
& M-Acc & H-Acc & M-Acc & H-Acc & M-Acc & H-Acc & M-Acc & H-Acc & M-Acc & H-Acc \\
\midrule

w/o Topological Manifold & 85.77 \down{8.7} & 70.40 \down{5.7} & 75.93 \down{6.5} & 67.63 \down{9.0} & 69.63 \down{6.7} & 64.27 \down{7.6} & 96.36 \down{2.9} & 90.69 \down{3.7} & 64.00 \down{33.6} & 89.83 \down{4.1} \\

w/o Conflict Resolution & \textbf{98.63} \up{5.0} & 66.43 \down{11.0} & 61.20 \down{24.7} & 53.20 \down{28.4} & 36.43 \down{51.2} & 32.20 \down{53.7} & \underline{99.10} \down{0.1} & \underline{93.84} \down{0.3} & 86.56 \down{10.2} & 82.60 \down{11.8} \\

w/o SimHash Indexing & 77.87 \down{17.1} & 57.90 \down{22.4} & 24.30 \down{70.1} & 18.80 \down{74.7} & 23.83 \down{68.1} & 17.80 \down{74.4} & 75.75 \down{23.6} & 71.31 \down{24.3} & 60.71 \down{37.0} & 58.24 \down{37.8} \\

w/o LoRA Adaptation & \underline{94.47} \up{0.6} & \underline{74.07} \down{0.8} & \underline{80.70} \down{0.7} & \underline{73.43} \down{1.1} & \underline{73.47} \down{1.6} & \underline{68.53} \down{1.5} & 99.04 \down{0.2} & 93.42 \down{0.8} & \underline{96.20} \down{0.2} & \underline{93.47} \down{0.2} \\

w/ Standard KG (Triples) & 75.07 \down{20.1} & 52.20 \down{30.1} & 52.20 \down{35.7} & 43.53 \down{41.4} & 46.20 \down{38.1} & 39.70 \down{42.9} & 79.18 \down{20.2} & 72.86 \down{22.6} & 15.85 \down{83.6} & 10.97 \down{88.3} \\

\midrule
\rowcolor{qdlbg} KeDKG w/ Llama-2-7B & 37.00 \down{60.6} & 29.33 \down{60.7} & 36.70 \down{54.8} & 28.33 \down{61.9} & 34.20 \down{54.2} & 27.47 \down{60.5} & 84.05 \down{15.3} & 71.47 \down{24.1} & 83.30 \down{13.6} & 70.02 \down{25.2} \\
\rowcolor{qdlbg} KeDKG w/ GPT-4o-mini & 48.13 \down{48.7} & 41.47 \down{44.4} & 47.13 \down{42.0} & 39.47 \down{46.9} & 46.37 \down{37.9} & 36.77 \down{47.1} & 88.17 \down{11.1} & 83.24 \down{11.6} & 86.67 \down{10.1} & 81.26 \down{13.2} \\
\rowcolor{qdlbg} HyperPatch w/ Llama-2-7B & 69.10 \down{26.4} & 37.53 \down{49.7} & 46.57 \down{42.7} & 37.40 \down{49.6} & 41.50 \down{44.4} & 36.40 \down{47.7} & 96.25 \down{3.0} & 75.96 \down{19.3} & 89.08 \down{7.6} & 75.80 \down{19.0} \\

\midrule
\rowcolor{fullbg} \textbf{\method{} (Full)} & 93.90 & \textbf{74.63} & \textbf{81.23} & \textbf{74.27} & \textbf{74.63} & \textbf{69.57} & \textbf{99.20} & \textbf{94.16} & \textbf{96.41} & \textbf{93.63} \\
\bottomrule
\end{tabular}
}
\caption{Ablation study quantifying the relative contribution of individual components. Colored indicators denote the relative percentage change compared to the full \method{} model. Best results are \textbf{bolded}; second-best are \underline{underlined} (metrics in \%).}
\label{tab:ablation}
\vspace{-4mm}
\end{table*}

\begin{figure}[t]
    \centering
    \includegraphics[width=0.99\linewidth]{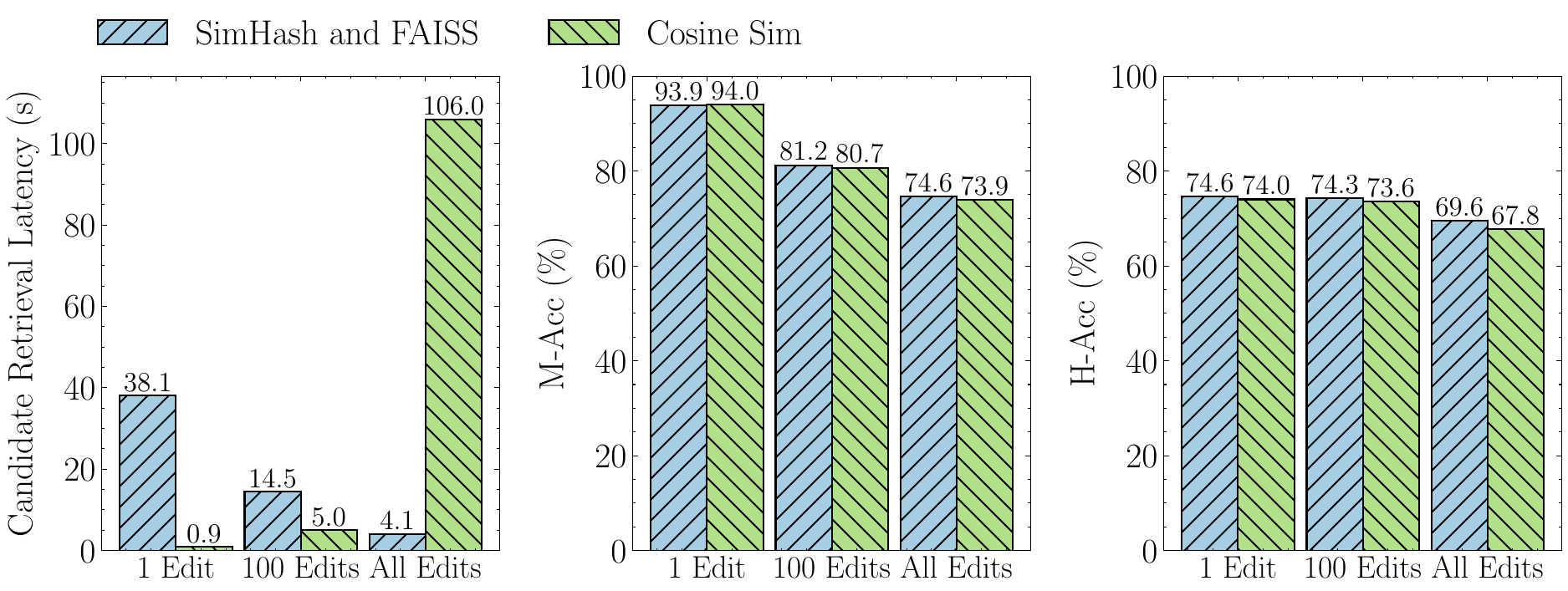}
    \caption{Candidate Retrieval: Efficiency vs. Effectiveness.}
    \label{fig:efficiency}
\end{figure}

\begin{figure}[t]
    \centering
    \includegraphics[width=0.99\linewidth]{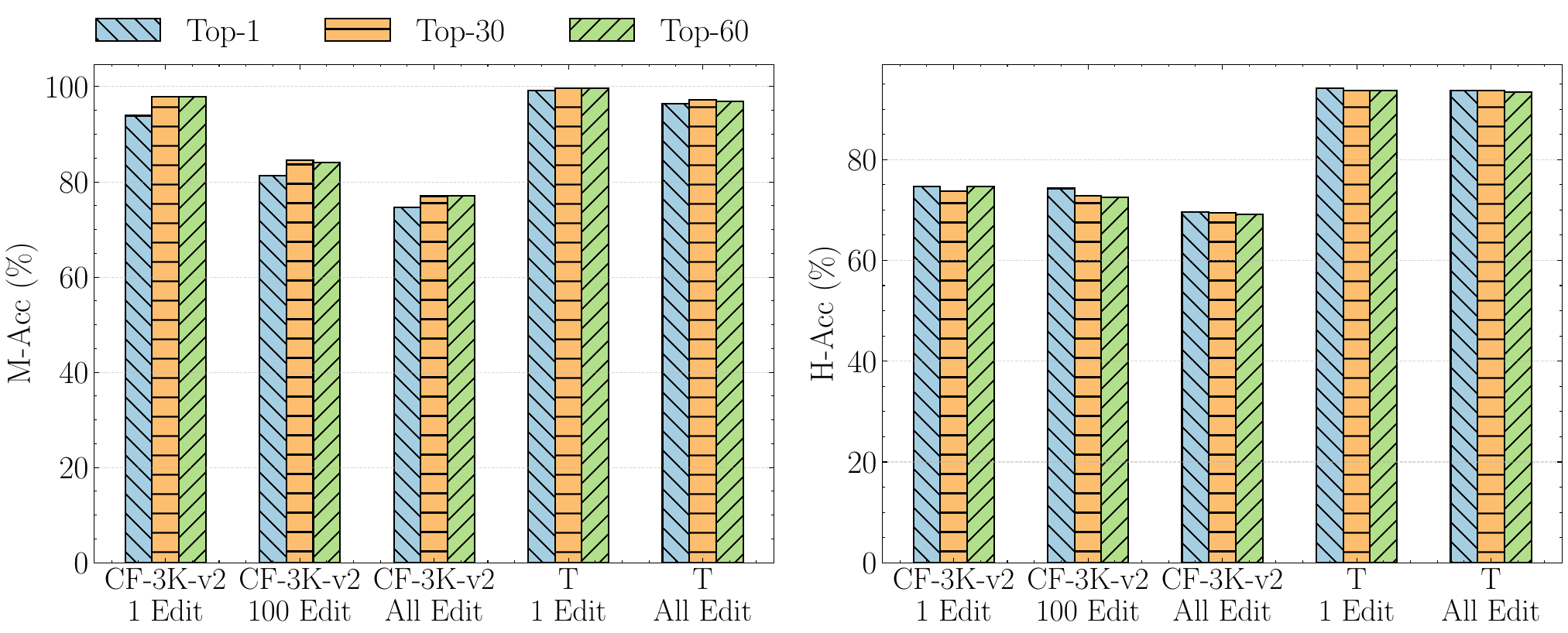}
    \caption{Top-$k$ of Maximum Inner Product Search (MIPS).}
    \label{fig:topk_macc_hacc}
\end{figure}

\subsection{Efficiency and Scalability Study}\label{sec:rq5}

\subsubsection{\textbf{Efficiency Study}}
We evaluate retrieval efficiency by comparing SimHash and FAISS against a brute-force cosine similarity baseline. As illustrated in Figure~\ref{fig:efficiency}, while all methods yield comparable accuracy across edit settings, the efficiency gap widens significantly as the number of edits scales. In the most demanding ``all-edited'' scenario, SimHash and FAISS reduce total query latency from 105.97s to 4.08s, a $25.9\times$ speedup, without sacrificing M-Acc or H-Acc. These results validate that our retrieval framework provides the high scalability necessary for real-time, large-scale knowledge editing applications. See Appendix~\ref{apc:rq2} for extended study on editing efficacy.

\subsubsection{\textbf{Impact of Retrieval Depth ($k$)}}
Figure~\ref{fig:topk_macc_hacc} examines how varying the retrieval depth $k \in \{1, 30, 60\}$ in Maximum Inner Product Search (MIPS) affects H-Acc. Counter-intuitively, larger $k$ values do not improve performance; $k=1$ achieves the highest accuracy across most scenarios. This suggests that a minimal set of highly relevant candidates is sufficient for multi-hop reasoning. Conversely, increasing $k$ inflates the token count and introduces extraneous or conflicting noise into the LLM's context, which degrades reasoning precision and increases inference overhead. Maintaining a concise context window ($k=1$) ensures a higher signal-to-noise ratio, which is critical for scalable QA systems operating under context length constraints.

\subsubsection{\textbf{Neural Scaling Law}}
To quantify how model capacity and edit volume influence performance, we analyze M-Acc and H-Acc scaling. As shown in Figure~\ref{fig:neural scaling law}, M-Acc follows the power-law, with a high $R^2$ confirming predictable scaling, providing a robust framework for estimating system behavior at scale.
The negative exponents characterize performance decay as complexity increases. Notably, sensitivity to dimensionality ($\beta = -0.48$) exceeds that of edit volume ($\alpha = -0.08$), suggesting excessive dimensionality triggers overfitting. 
In contrast, H-Acc a disparate scaling regime.
While the positive coefficients suggest marginal gains from increased capacity and data, the lower $R^2$ reveals a more stochastic behavior. H-Acc is governed by local graph properties and the stability of specific reasoning paths, making it highly sensitive to the Event Factorization Drift identified in our structural analysis.

\begin{figure}[t]
    \centering
    \includegraphics[width=0.99\linewidth, trim=0 10mm 0 0]{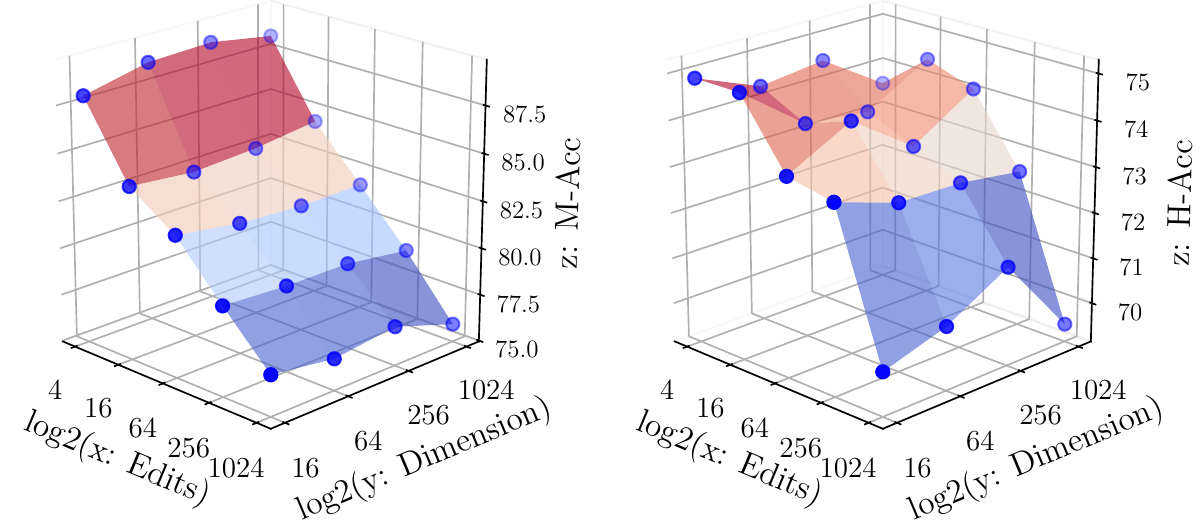}
    
    \caption{\\
        M-Acc Neural Scaling Law\hspace{0.8cm}H-Acc Neural Scaling Law\\
        $z = 93.57 \cdot x^{-0.08} + 1.62 \cdot y^{-0.48}$
        \hspace{0.35cm}
        $z = 74.44 \cdot x^{0.03} + 3.14 \cdot y^{0.23}$\\
        $R^2 = 0.9692$
        \hspace{2.9cm}
        $R^2 = 0.5673$ 
    }
    \vspace{-6mm}
    \label{fig:neural scaling law}

\end{figure}

\subsection{Ablation Study} \label{sec:rq2}
We conduct ablation studies to quantify the contribution of each component to \method{}'s performance, as summarized in \Cref{tab:ablation}.

\noindent\textbf{Topological Manifold.} 
Compared to a semantic-only baseline, omitting the topological manifold significantly degrades multi-hop QA performance. While semantic similarity identifies lexically relevant nodes, it fails to bridge lexically distant but structurally pivotal entities. The topological manifold captures higher-order relational neighborhoods, ensuring globally coherent reasoning chains, especially when lexical signals are sparse or ambiguous.

\noindent\textbf{Subject-Detection in Conflict Resolution.} 
Disabling span-based conflict resolution increases erroneous \textsc{Replace} operations, underscoring the necessity of subject-level cues for target disambiguation. Absent these cues, the system conflates semantically proximate but topologically distinct facts, triggering unintended overwrites. This module ensures that updates are grounded in the correct structural context, preserving the fidelity of the sequential knowledge stream.

\noindent\textbf{SimHash Indexing.} 
To evaluate the efficacy of our hyperedge linking, we replace the SimHash and FAISS-based candidate selection with random sampling. This results in a collapse of linking accuracy and a surge in incorrect edits. These findings confirm that our LSH-based retrieval is more than a latency optimization; it acts as a high-precision filter that prunes the search space to semantically and topologically congruent candidates.

\noindent\textbf{Topological LoRA Adaptation. } 
Removing the LoRA modules yields consistent accuracy degradation across all configurations, indicating that incremental HGNN adaptation is vital for maintaining representation quality as the knowledge base evolves. While the structural prior remains robust, LoRA facilitates parameter-efficient realignment with newly injected content.
By dynamically recalibrating entity-event proximities, LoRA effectively counters structural drift, mitigating the neighborhood reshaping effect, without inducing catastrophic forgetting in the backbone.

\noindent\textbf{Relational Atomicity: Hypergraph vs. Triple Structure. } 
A core thesis of our work is that $n$-ary hypergraphs prevent \textit{Event Factorization Drift}. Replacing the hypergraph structure with conventional binary triples leads to a sharp performance collapse across all evaluation settings (Table~\ref{tab:ablation}). 
The degradation confirms that binary reification fragments complex events, fracturing the semantic coupling required for multi-hop traversal. Hypergraphs preserve the atomic integrity of relational constraints, which is essential for consistent reasoning under structural drift.

\definecolor{qdlbg}{rgb}{0.94, 0.97, 1.0}
\par\noindent \textbf{Efficacy of Question Decomposition. } 
We examine the impact of decomposition quality. Initially utilizing a LLaMA-2-7B model from the KeDKG \cite{lu2025knowledge} framework, we observed limited generalization on complex updates. Replacing this with a GPT-4o-mini-based decomposer yielded significant gains (\Cref{tab:ablation}, rows in \colorbox{qdlbg}{blue}). This underscores that high-fidelity decomposition is a prerequisite for effective sequential editing, as it provides the necessary plan for the subsequent dual-manifold retrieval and reasoning synthesis steps.

\begin{table}[t]
\centering
\small
\setlength\tabcolsep{2pt}
\resizebox{0.99\columnwidth}{!}{%
\begin{tabular}{lcccc}
\toprule
\textbf{Feature} & \textbf{MeLLo} \cite{zhong2023mquake} & \textbf{PokeMQA} \cite{gu2024pokemqa} & \textbf{KeDKG} \cite{lu2025knowledge} & \textbf{\method{}} \\
\midrule
Representation  & Sentential  & Sentential   & Binary Graph & \textbf{$n$-ary Hyper.} \\
Fact Unit       & Sentence    & Sentence     & Triple       & \textbf{Hyperedge}      \\
Update Policy   & Lazy        & Lazy         & Per-Query    & \textbf{Batched}        \\
Decomposition   & LLM-driven  & LLM-driven   & Template     & \textbf{Template}       \\
Retrieval Unit  & Sentence    & Sentence     & Entity       & \textbf{Hyperedge}      \\
Retrieval Space & Semantic    & Sem.+Conf.   & Symbolic     & \textbf{Dual-Manif.}    \\
Retriever Ensemble        & \xmark      & \cmark       & \cmark       & \cmark                  \\
\bottomrule
\end{tabular}
}
\caption{Knowledge editing framework comparison.}
\label{tab:comparison}
\vspace{-8mm}
\end{table}

\section{Related Work}\label{sec:work}

\noindent \textbf{Sequential Knowledge Editing (SKE). }
SKE aims to modify factual priors in LLMs under non-stationary edit streams. \textit{Parametric} methods, such as ROME \cite{meng2022locating} and MEMIT \cite{meng2023mass}, perform localized weight interventions; however, sequential updates often trigger weight-space instability, leading to catastrophic forgetting \cite{hartvigsen2024aging} and multi-hop reasoning failures \cite{zhong2023mquake}. Meta-learning approaches like MEND \cite{mitchell2021fast} improve update scalability but remain sensitive to variance in continuous streams. Conversely, \textit{memory-based} editors like SERAC \cite{mitchell2022memory} and GRACE \cite{hartvigsen2024aging} bypass weight modification via external caches but typically treat edits as \textit{i.i.d.} key-value pairs, ignoring the latent structural dependencies between facts. Recent work has questioned whether the editing benchmarks and metrics properly reflect reliable editing behavior, especially in terms of locality and robustness under realistic evaluation settings \cite{liu2026evaluating,liu2025model}. As categorized in Table~\ref{tab:comparison}, \method{} diverges from these reactive paradigms through a \textit{Proactive Temporal Policy}: we introduce upfront structural adaptation to preserve the \textit{topological atomicity} required for consistent multi-hop reasoning across sequential edit streams.

\noindent \textbf{Retrieval-Augmented Generation (RAG).}
RAG decouples knowledge acquisition from parametric reasoning using external corpora \cite{lewis2020retrieval} or graphs \cite{edge2024global}. Retrieval reliability is inherently vulnerable to non-stationary distribution shifts \cite{thakur2021beir, gama2014survey} where query distributions diverge from training priors \cite{mallen2023not}. While retrieval editors \cite{han2023memory} typically assume static embedding manifolds, we identify \textbf{factorization drift}, the erosion of \nary{} event integrity within binary latent spaces, as a fundamental failure mode. \method{} introduces \textbf{Dual-Manifold Retrieval} to synchronize the semantic manifold with a structure-aware HGNN latent space, ensuring evidence remains topologically congruent across sequential updates.

\noindent \textbf{Symbolic Structure vs. Latent Synchronization.}
Property graphs and temporal knowledge graphs can preserve evolving facts through edge attributes, qualifiers, or temporal metadata at the symbolic level. However, such symbolic updates do not necessarily update the dense retriever's latent manifold, which may remain biased toward stale pre-edit priors. As a result, a knowledge base can be symbolically correct while still being functionally misaligned with neural retrieval, especially in multi-hop QA where constraints are often implicit, linguistically diverse, or distributed across multiple reasoning steps. In contrast, \method{} treats sequential editing as a neuro-symbolic synchronization problem: structural updates modify the \nary{} hypergraph state, while topological adaptation realigns the structure-aware embedding manifold with the evolved topology. This distinction moves beyond symbolic filtering or attribute matching, ensuring that updated events are both explicitly stored and retrievable through latent similarity.

\noindent \textbf{Hypergraph Representation Learning.}
Real-world knowledge comprises complex $n$-ary relations that resist lossless decomposition into binary triples \cite{wen2016learning}. Traditional KG embeddings \cite{bordes2013translating} typically rely on binary reification, which decouples semantic associations and induces \textit{reification bottlenecks}, i.e., the loss of joint coupling between participants. Hypergraph Neural Networks (HGNNs) \cite{feng2019hypergraph, yadati2019hypergcn, zheng2025modeling} offer a principled alternative by encoding $n$-ary relations as atomic topological units, effectively capturing higher-order correlations. While HGNNs have been utilized for static tasks like cross-document QA \cite{tu2020document}, their utility in sequential knowledge editing remains unaddressed. \method{} leverages the \textbf{topological atomicity} of hypergraphs to provide a robust substrate for dynamic updates. By treating events as unified hyperedges rather than fragmented triple-chains, we preclude the spurious composition failures characteristic of reified binary systems (see Table~\ref{tab:comparison}).

\section{Conclusion}
We present \method{}, a framework that mitigates \textit{structural drift} in sequential knowledge editing. By replacing fragmented binary triples with atomic hyperedges and leveraging $O(1)$ SimHash-based alignment, \method{} ensures LLM reasoning remains grounded in evolving topologies. Experimental results demonstrate that \method{} achieves state-of-the-art H-Acc (96.24\% and 21.06\%) on complex benchmarks, mitigating the severe accuracy collapses (42.9\%--88.3\%) inherent in KG-based variants. Combined with a $25.9\times$ retrieval speedup, these findings underscore that topological integrity is as critical as parametric fidelity for the development of reliable, long-lived knowledge-augmented LLMs.

\begin{acks}
This work is partially supported by the National Science and Technology Council (NSTC), Taiwan (Grants: NSTC-114-2640-E-A49-011, 114-2222-E-A49-004, and 114-2639-E-A49-001-ASP).
\end{acks}

\newpage
\bibliographystyle{ACM-Reference-Format}
\bibliography{kdd}

\appendix

\section{Extended Experiments}\label{ap:extend}

\subsection{Editing Efficacy} \label{apc:rq2}

\noindent \textbf{Hyperedge Linking Efficacy. }
We specifically analyze the \textsc{Replace} editing operation, as it represents the most critical bottleneck for structural integrity. In this setting, the system must accurately localize an existing hyperedge to be superseded; a failure in linking directly results in a missed edit or an erroneous graph modification. In contrast, \textsc{Add} operations are non-destructive and less sensitive to linking precision, as they treat incoming knowledge as independent structural expansions.
Figure~\ref{fig:Hyperedge linking} (left) illustrates the F1-score performance of \method{} compared to brute-force cosine similarity and random selection across various top-$k$ retrieval thresholds. While the cosine baseline represents an exhaustive upper bound for retrieval accuracy, it is computationally prohibitive in large-scale sequential environments. \method{} achieves $F_1$ scores nearly identical to cosine similarity while operating on a significantly pruned search space. By utilizing SimHash signatures and binary indexing, \method{} balances high-dimensional retrieval precision with orders-of-magnitude improvements in computational latency (detailed in \Cref{sec:rq5}), offering a scalable alternative to traditional dense similarity metrics.

\begin{figure}[t]
    \centering
    \includegraphics[width=0.9\linewidth]{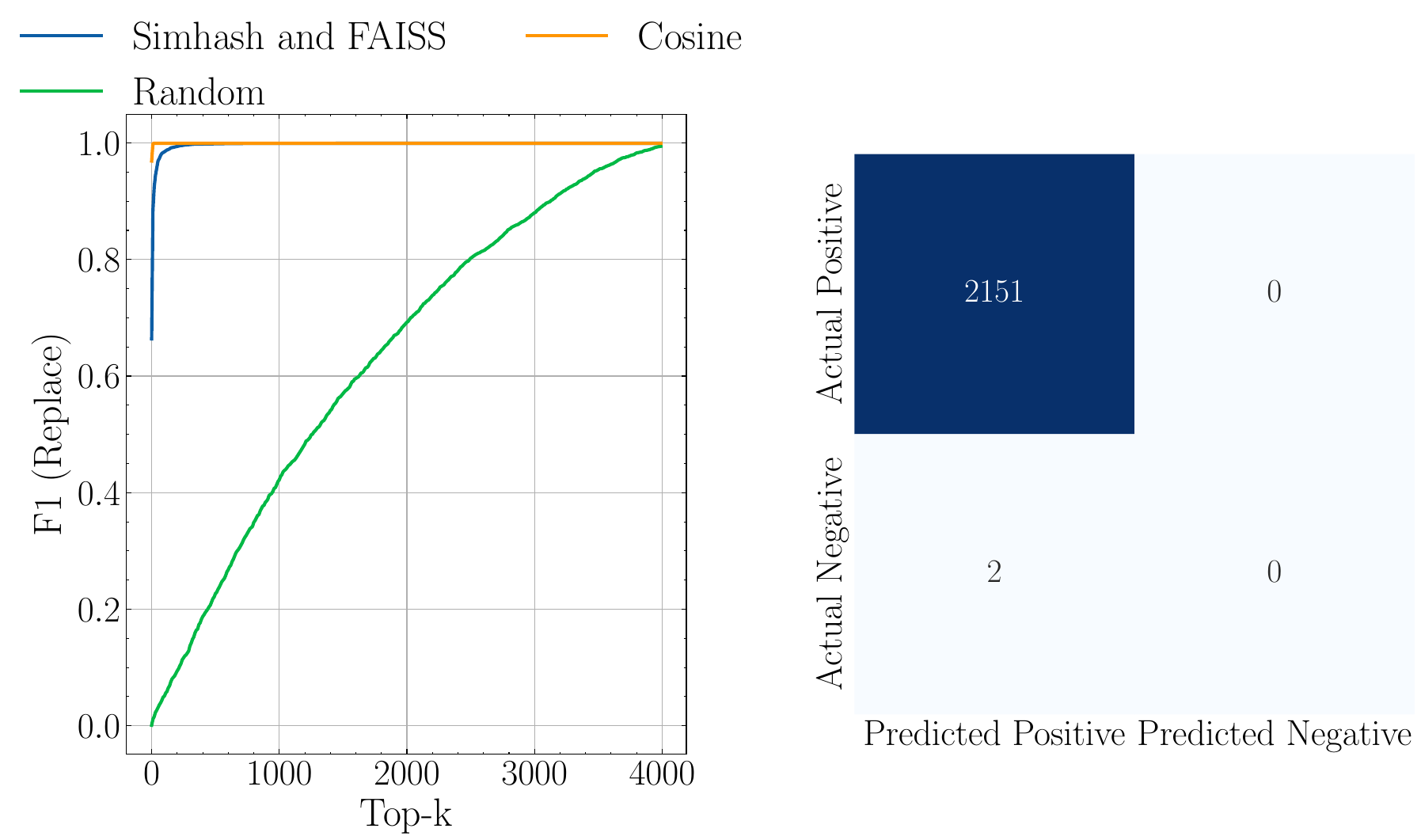}
    \vspace{-4mm}
    \caption{Hyperedge linking efficacy and conflict resolution accuracy. (Left) $F_1@k$ performance for identifying \textsc{Replace} operations across varying retrieval depths. (Right) Confusion matrix validating the discriminative accuracy of \method{}'s topological conflict resolution logic.}
    \label{fig:Hyperedge linking}
    \vspace{-6mm}
\end{figure}

\begin{table}[t]
\centering
\begin{tabular}{lccccc}
\toprule
\textbf{Method} & \textbf{Total} & \textbf{FAISS} & \textbf{Embed.} & \textbf{Graph} & \textbf{CF-All} \\
\midrule
KeDKG & 5.68 & -- & -- & 5.68 & 36.77 \\
\method{} & 64.21 & 0.0665 & 57.75 & 6.40 & \textbf{69.57} \\
\bottomrule
\end{tabular}
\caption{Memory footprint comparison at 3,000 edits. Memory values are reported in MB. CF-All denotes Hop-wise Accuracy on the MQuAKE-CF-3K-v2 all-edit setting.}
\vspace{-10mm}
\label{tab:memory_footprint}
\end{table}

\noindent \textbf{Topological Conflict Resolution Accuracy.}
To evaluate the discriminative reliability of the decision logic $\Psi$, we assess its capacity to disambiguate \textsc{Replace} and \textsc{Add} operations. Figure~\ref{fig:Hyperedge linking} (right) presents the confusion matrix for the \textsf{MQuAKE-CF-3k-v2} dataset, utilizing ground-truth labels based on semantic intent. The results reveal exceptional discriminative fidelity: \method{} correctly linked 2,151 of 2,153 \textsc{Replace} cases, achieving a 99.9\% precision rate with zero missed matches. This high precision is critical for hypergraph stability, ensuring that topological modifications are anchored to existing structures without inducing unintended overwrites. The ability to maintain such accuracy despite the linguistic variance of $n$-ary facts further validates the robustness of our span-based alignment mechanism.

\begin{figure}[t]
\centering
\footnotesize

\includegraphics[width=0.8\linewidth]{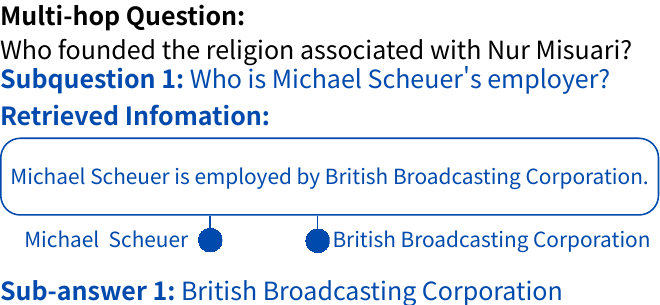}\\[-1mm]
\textbf{(a)}\\[-0.5mm]

\includegraphics[width=0.8\linewidth]{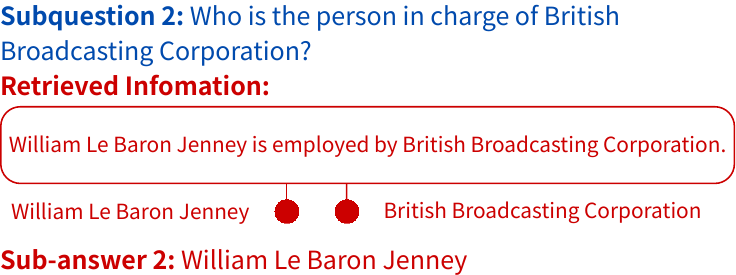}\\[-1mm]
\textbf{(b)}\\[-0.5mm]

\includegraphics[width=0.8\linewidth]{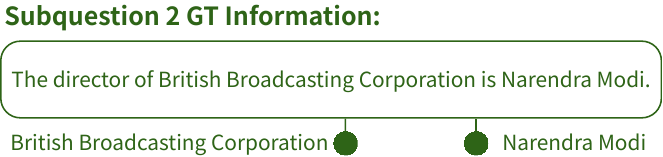}\\[-1mm]
\textbf{(c)}

\vspace{-1mm}
\caption{Retrieval-induced recall error. (a) successful evidence grounding in the first reasoning hop; (b) a relational mismatch in the second hop, where entity-centric overlap dominates the retrieval signal; (c) the target hyperedge present in the updated topology but missed by the retriever.}
\Description{Three panels illustrating a retrieval-induced recall error: successful first-hop grounding, erroneous retrieval via entity overlap, and the missed ground-truth relational hyperedge.}
\label{fig:recall_error}

\vspace{-3mm}
\end{figure}

\begin{figure}[t]
\centering
\footnotesize

\includegraphics[width=1\linewidth]{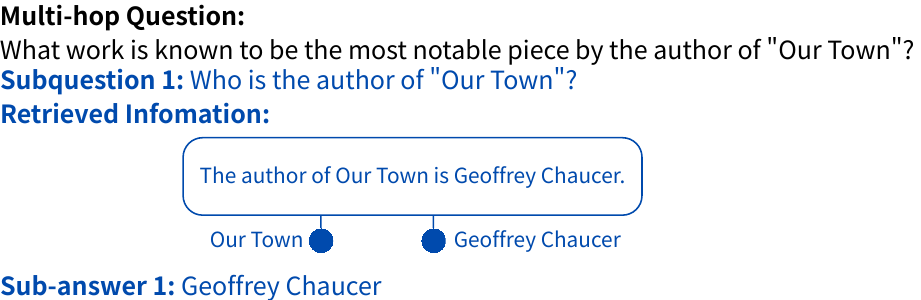}\\[-1mm]
\textbf{(a)}\\[-0.5mm]

\includegraphics[width=1\linewidth]{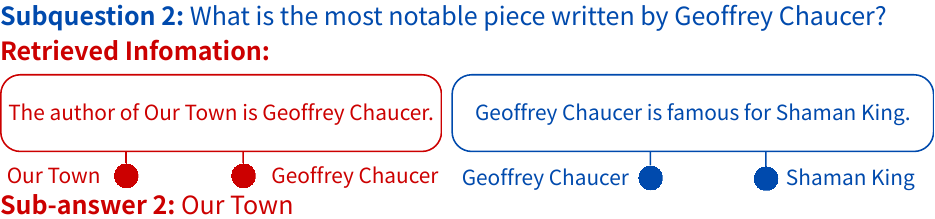}\\[-1mm]
\textbf{(b)}\\[-0.5mm]

\includegraphics[width=0.5\linewidth]{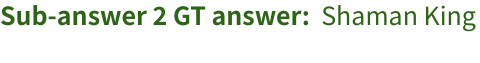}\\[-1mm]
\textbf{(c)}

\vspace{-1mm}
\caption{Reasoning-induced contextual noise. (a) valid initial reasoning step; (b) introduction of a redundant distractor that induces contextual interference; (c) the correct ground-truth hyperedge, which was successfully retrieved but ignored by the reasoning module.}
\Description{Three panels illustrating reasoning-induced contextual noise: a valid initial reasoning step, redundant or conflicting information, and the correct ground-truth hyperedge.}
\label{fig:noising_error}
\vspace{-6mm}
\end{figure}

\begin{table*}[h]
\centering
\small
\resizebox{1\textwidth}{!}{%
\begin{tabular}{p{4cm} p{4.5cm} p{9.6cm}}
\toprule
\multicolumn{3}{c}{
    \begin{tabular}{@{} p{7cm} | p{8.6cm} @{}}
        \textbf{Old Knowledge:} & \textbf{New Knowledge:} \\
        \textsf{Ciudad de Murcia is associated with the sport of association football.} & \textsf{Ciudad de Murcia is associated with the sport of basketball.} \\ [-20pt]
        & \\ \cmidrule{2-2} 
         & \textbf{Multi-hop Question:} \\
        \textsf{association football was created in the country of England.} & \textsf{What is the capital of the country where the sport associated with} \\
        \textsf{The capital of England is London.} & \textsf{Ciudad de Murcia was originated?}\\
        \textsf{basketball was created in the country of Soviet Union.} & \\
        \textsf{The capital of Soviet Union is Russellville.} & \\
    \end{tabular}
} \\ 
\toprule
\textbf{Stage} & \textbf{Knowledge Graph (KEDKG)} & \textbf{Hypergraph (HyperPatch)} \\
\midrule
Initial Graph State & \textsf{(1). Ciudad de Murcia $\rightarrow$ sport $\rightarrow$ association football $\rightarrow$ country $\rightarrow$ England $\rightarrow$ capital $\rightarrow$ London (2). basketball $\rightarrow$ country $\rightarrow$ Soviet Union $\rightarrow$ capital $\rightarrow$ Russellville} &  \textsf{(1). Ciudad de Murcia $\leftarrow$ [Ciudad de Murcia is associated with the sport of association football.] $\rightarrow$ association football $\leftarrow$ [association football was created in the country of England. ] $\rightarrow$ England $\leftarrow$ [The capital of England is London. ] $\rightarrow$ London (2). basketball $\leftarrow$ [basketball was created in the country of Soviet Union.] $\rightarrow$ Soviet Union $\leftarrow$ [The capital of Soviet Union is Russellville.] $\rightarrow$ Russellville} \\
\midrule
New Knowledge Extraction & \textsf{(Ciudad de Murcia, is associated, basketball)} & \textsf{Ciudad de Murcia is associated with the sport of basketball. | Ciudad de Murcia | basketball}\\
\midrule
Post-Edit Graph State & \textsf{Ciudad de Murcia $\rightarrow$ (sport $\rightarrow$ association football $\rightarrow$ country $\rightarrow$ England $\rightarrow$ capital $\rightarrow$ London) \& (is associated $\rightarrow$ basketball $\rightarrow$ country $\rightarrow$ Soviet Union $\rightarrow$ capital $\rightarrow$ Russellville)} &  \textsf{(1). association football $\leftarrow$ [association football was created in the country of England. ] $\rightarrow$ England $\leftarrow$ [The capital of England is London. ] $\rightarrow$ London (2). Ciudad de Murcia $\leftarrow$ [Ciudad de Murcia is associated with the sport of basketball.] $\rightarrow$ basketball $\leftarrow$ [basketball was created in the country of Soviet Union.] $\rightarrow$ Soviet Union $\leftarrow$ [The capital of Soviet Union is Russellville.] $\rightarrow$ Russellville} \\
\midrule
Multi-hop Reasoning Path & \textcolor{red}{\textsf{$a^{(1)}$: association football}}, \textcolor{red}{\textsf{$a^{(2)}$: England}}, \textcolor{red}{\textsf{$a^{(3)}$: London}}  & \textcolor{blue}{\textsf{$a^{(1)}$: basketball}}, \textcolor{blue}{\textsf{$a^{(2)}$:  Soviet Union}}, \textcolor{blue}{\textsf{$a^{(3)}$: Russellville}} \\ 
\bottomrule
\end{tabular}
}
\caption{Qualitative comparison of knowledge editing between standard KG and \nary{} Hypergraph (3-hop). \textcolor{blue}{Blue} indicates correct answers; \textcolor{red}{red} indicates incorrect answers from spurious composition.}
\label{tab:qualitative_example_3_hop}
\vspace{-8mm}
\end{table*}

\subsection{Memory Footprint Analysis}
Memory serves as a vital substrate for preserving relational atomicity under \nary{} structural drift. Although \method{} incurs higher storage overhead than triple-based methods, the additional cost remains modest in absolute terms. At 3,000 edits, \method{} requires 64.21MB, compared with 5.68MB for KeDKG. This overhead yields a substantial gain in reasoning reliability, improving CF-All H-Acc from 36.77\% to 69.57\%.
As shown in Table~\ref{tab:memory_footprint}, the graph structure of \method{} occupies only 6.40MB, which is comparable to KeDKG. Most of the additional memory comes from the structure-aware manifold embeddings, which require 57.75MB. This indicates that the memory increase is not caused by excessive graph storage, but by the latent representations needed to resolve \nary{} drift in non-stationary environments. In contrast, triple-based reification induces structural fragmentation, leading to severe accuracy degradation despite its smaller memory footprint.

\subsection{Error Analysis}
To provide a granular understanding of system limitations, we categorize failure modes into two primary types: \textbf{retrieval-induced recall errors} and \textbf{reasoning-induced contextual noise}.
Crucially, we observe that these errors stem from the underlying hypergraph-based RAG pipeline rather than the knowledge editing mechanism itself. In the following cases, while the hypergraph topology was successfully updated to reflect the new facts, the downstream retrieval and reasoning modules, which remain agnostic to our editing proposal, introduced independent architectural bottlenecks.

\noindent \textbf{\textbf{Recall Error}}
Figure~\ref{fig:recall_error} illustrates a recall failure originating from the retrieval manifold rather than a failed edit. In the first reasoning hop (Figure~\ref{fig:recall_error}(a)), the system correctly identifies Michael Scheuer's employer as the British Broadcasting Corporation (BBC). However, in the subsequent hop (Figure~\ref{fig:recall_error}(b)), the retriever erroneously fetches a hyperedge related to William Le Baron Jenney. While the edit successfully placed the target fact (Narendra Modi as director, Figure~\ref{fig:recall_error}(c)) into the graph, the retrieval module favored entity overlap (BBC) over the specific relational intent of the query. This shows that recall errors belong to the base retriever's limitations and do not indicate a failure in the structural update logic.

\noindent \textbf{Contextual Noise}
Figure~\ref{fig:noising_error} demonstrates a reasoning-induced error. As shown in Figure~\ref{fig:noising_error}(a), the initial sub-question is resolved correctly. During the second hop, the system successfully retrieves the edited ground-truth fact (Geoffrey Chaucer's association with ``Shaman King'', Figure~\ref{fig:noising_error}(c)). However, the RAG context is contaminated by a redundant, noisy hyperedge from the previous step (Figure~\ref{fig:noising_error}(b)). This distraction causes the generator to select the incorrect sub-answer. This failure is purely a reasoning bottleneck within the LLM-based RAG process; the fact that the correct edited knowledge was retrieved confirms that the knowledge editing component performed as intended.

\subsection{Qualitative Study}\label{ap:casestudy}
As shown in Table~\ref{tab:qualitative_example_3_hop}, the initial knowledge states that \textsf{Ciudad de Murcia is associated with the sport of association football}, and that \textsf{association football was created in the country of England}, whose capital is London. A new fact later replaces the sport associated with \textsf{Ciudad de Murcia} to \textsf{basketball}, prompting an edit. In the knowledge graph setting, each triple is decomposed into binary relations such as \textsf{(Ciudad de Murcia $\rightarrow$ sport $\rightarrow$ association football)} and \textsf{(association football $\rightarrow$ country $\rightarrow$ England)}. After the new fact is introduced, the edited graph appends the new relation \textsf{(Ciudad de Murcia $\rightarrow$ sport $\rightarrow$ basketball)} while retaining the old one, causing both to coexist. As a result, the downstream multi-hop path \textsf{Ciudad de Murcia $\rightarrow$ association football $\rightarrow$ England $\rightarrow$ London} remains reachable, yielding outdated answers. In contrast, the hypergraph structure preserves each sentence as an atomic hyperedge, maintaining semantic integrity. The original hyperedge indicating that Ciudad de Murcia is associated with association football is removed entirely and replaced by a new hyperedge stating that it is associated with basketball. As a result, the updated multi-hop path proceeds through \textsf{basketball}, then to \textsf{Soviet Union}, and finally reaches the correct answer \textsf{Russellville}.

\end{document}